\documentclass[runningheads]{llncs}
\usepackage[T1]{fontenc}
\usepackage{graphicx}
\usepackage{booktabs}
\usepackage[misc]{ifsym}
\newcommand{\corr}{(\Letter)}
\usepackage{mwe}

\usepackage{url}
\usepackage{multirow}
\usepackage{algorithm}%
\usepackage{algorithmicx}%
\usepackage{algpseudocode}%
\usepackage{listings}%

\usepackage{hyperref}
\usepackage[caption=false]{subfig}
\usepackage{pdfpages}
\usepackage{rotating}

\usepackage{amsmath}
\usepackage{amsfonts}

\usepackage{makecell}
\usepackage{enumitem}

\usepackage{subcaption}
\usepackage{float} 

\begin{document}




\title{Understanding Latent Flow Models for Tabular Data Synthesis: Targets, Paths, and Sampling}

\titlerunning{Latent Flow Models for Tabular Synthesis}



\author{Bahrul Ilmi Nasution \corr
} 

\institute{Department of Social Statistics, The University of Manchester, UK \\ \email{bahrul.nasution@manchester.ac.uk}
}



\maketitle              
\begingroup

\renewcommand{\thefootnote}{}
\footnotetext{Accepted at European Conference on Machine Learning and Principles and Practice of Knowledge Discovery in Databases (ECML-PKDD) 2026, Applied Data Science Track. This is the author's accepted manuscript.}
\endgroup

\begin{abstract}

Synthetic tabular data enables microdata sharing in regulated domains, yet deploying continuous-time generative models requires balancing analytical utility, disclosure risk, and computational cost. Latent-space flow models are flexible, but theoretical equivalences across learning targets, probability paths, and sampling dynamics can translate into different behaviour under finite-step integration and explicit compute budgets. We present an empirical study of tabular latent flow models across seven datasets, evaluating velocity, score, noise, and posterior matching objectives under optimal transport (OT) and variance-preserving (VP) paths, ODE and SDE sampling, and varying integration budgets. Our contributions are threefold: (1) we show that the learning target largely determines the utility–risk operating regime, with velocity and posterior matching tending to yield higher utility, while score and noise matching tend to achieve lower disclosure risk; (2) we demonstrate that configuration and sampling choices shift performance, with midpoint often improving distributional fidelity and OT paths often tolerating earlier stopping than VP, enabling compute savings under fixed budgets or risk thresholds; and (3) we distil these findings into actionable defaults and practical configuration guidance to support pre-release model selection under disclosure risk and resource constraints. The code implementation and supplementary materials can be accessed in \url{https://github.com/rulnasution/tabular-latent-flow/}.


\keywords{Diffusion \and flow matching \and latent flow models \and tabular data synthesis \and sampling dynamics.}

\end{abstract}

\section{Introduction}
\label{sec:intro}

Data owners, such as National Statistics Offices (NSOs) and financial institutions, face a persistent dilemma: enabling external researchers and policymakers to analyse microdata without revealing individual information. Tabular data synthesis addresses this challenge by generating data that preserves statistical properties but has different records from the original data~\cite{Little2024synthetic}. However, compared with images and text, tabular data poses distinct generative challenges. The tabular generative model mixes continuous and categorical variables, exhibits complex dependencies, and requires understanding of the relationship between utility and disclosure risk~\cite{zhang2024mixedtype,Little2022comparing}. Mastering these challenges is central for data synthesis in regulated domains.

Deep generative models have recently improved the quality of synthetic tabular data, with flow-based approaches, such as flow matching (FM), becoming increasingly prominent~\cite{guzman-cordero2025exponential,nasution2025flowmatchingtabulardata}. FM~\cite{albergo2023stochasticinterpolantsunifyingframework,lipman2023flowmatchinggenerativemodeling} is a continuous-time generative framework that learns a vector field transporting a simple base distribution to the data distribution through an ordinary differential equation, while supporting simulation-free training. Flow-based and diffusion-based models can also be trained in latent spaces, where the generative dynamics operate on continuous representations that are later decoded back to the data space using a variational autoencoder (VAE)~\cite{dao2023flowmatchinglatentspace,rombach2022highresolutionimagesynthesislatent}. For tabular data, recent latent-space generators adopt this strategy by training diffusion~\cite{zhang2024mixedtype} or flow-matching models~\cite{nasution2025flowmatchingtabulardata} on tabular latent representations. By mapping mixed-type tabular records into a continuous latent space, these approaches can simplify generative modelling and reduce the cost of operating directly on heterogeneous features, which has competitive performance in prior works~\cite{zhang2024mixedtype,nasution2025flowmatchingtabulardata}.

Existing theory on diffusion and flow-based generative models shows that, under Gaussian probability paths, velocity, score, and noise matching are analytically connected~\cite{lipman2023flowmatchinggenerativemodeling,holderrieth2025lecturenotes}. Similarly, reverse-SDE samplers and probability-flow ODE samplers can share the same continuous-time marginal distributions~\cite{song2021maximum}. Meanwhile, posterior matching, or variational flow matching (VFM), is also closely related to flow matching in continuous Gaussian settings, while introducing a variational training formulation and an associated stochastic sampling procedure~\cite{eijkelboom2024variationalflowmatchinggraph,guzman-cordero2025exponential}. However, these theoretical connections do not determine which configuration works best in finite-step latent tabular synthesis, where models are learned approximately, sampled under limited compute, and trained using different objectives. Thus, an empirical evaluation of learning targets, probability paths, sampling dynamics, and numerical solvers becomes essential in practice.

This work addresses these questions through a systematic empirical study of latent tabular flow models. In this study, latent tabular flow models refer to latent continuous-time generative models defined by a probability path and sampled through ODE or SDE dynamics. Flow, noise, score, and posterior matching denote different learning-target parameterisations within this shared setup, rather than separate model families. We evaluate representative target--path configurations across datasets and quantify utility, disclosure risk, and efficiency under controlled changes in learning target, probability path, sampling dynamics, solver family, integration endpoint, and integration steps. The goal is to provide practitioner-facing guidance on how to configure latent tabular flow models for different circumstances. Specifically, we study the following research questions:

\begin{itemize}
    \item \textbf{Q1}: Does training with different learning targets (velocity, score, noise, and posterior matching) produce empirically equivalent results in tabular latent flow models, as suggested by theory? 

    \item \textbf{Q2}: How does the choice of probability path (OT versus VP), affect utility and disclosure risk across learning targets? 

    \item \textbf{Q3}: How do sampling dynamics and numerical solvers compare across trained tabular latent flow models in terms of data quality and disclosure risk? 

    \item \textbf{Q4}: How does the integration endpoint affect the quality of the synthetic data for each target-path configuration? 

    \item \textbf{Q5}: How do integration steps affect the synthetic data quality for each target-path configuration? 
\end{itemize}

Together, these questions clarify how theoretical modeling choices translate into empirical behavior in tabular data synthesis, and they support informed configuration decisions under competing utility, disclosure risk, and computational constraints.

\section{Related Work}
\label{sec:related}

\textbf{Flow and diffusion models.}
Diffusion models learn reverse-time dynamics through SDEs and are strong baselines for high-quality generation~\cite{ho2020denoising,song2019generative,song2021maximum}. Instead, FM learns a deterministic velocity field, with conditional FM enabling simulation-free training~\cite{lipman2023flowmatchinggenerativemodeling,liu2022flowstraightfastlearning}. Both frameworks are based on probability path design, where OT (for FM) and VP (for diffusion) paths are common choices and ODE/SDE sampling can be closely related under Gaussian paths~\cite{albergo2023stochasticinterpolantsunifyingframework,lipman2023flowmatchinggenerativemodeling,holderrieth2025lecturenotes}. VFM further connects velocity learning to variational inference and score-based objectives~\cite{eijkelboom2024variationalflowmatchinggraph,guzman-cordero2025exponential}. Recent work also models latent distributions learnt by VAEs using diffusion or flow models, including in tabular settings~\cite{rombach2022highresolutionimagesynthesislatent,dao2023flowmatchinglatentspace,zhang2024mixedtype,nasution2025flowmatchingtabulardata}.

\noindent\textbf{Tabular synthesis and practical design choices.}
Tabular synthesis should handle mixed feature types, limited data, and also utility and risk. Diffusion-based approaches such as TabDDPM and latent diffusion variants (e.g., TabSyn) demonstrate strong utility and fidelity~\cite{kotelnikov2023tabddpm,zhang2024mixedtype,mueller2025continuous,villaiz2025diffusion}. On the other hand, FM and VFM provide a more sampling-efficient alternative in both latent and data space on tabular benchmarks~\cite{nasution2025flowmatchingtabulardata,guzman-cordero2025exponential}. However, within a unified latent continuous-time setup, the practical impact of choosing learning targets, paths (OT vs.\ VP), deterministic versus stochastic sampling, and solver families remains less systematically studied. Our work addresses this by benchmarking these choices under controlled sampling budgets for tabular latent flow models.

\section{Background}
\label{sec:background}

This section discusses the essentials for our study, covering flow model fundamentals and probability paths (Section~\ref{subsec:fm_cfm}), followed by a brief understanding of learning targets including velocity, score, noise, and posterior (Section~\ref{subsec:targets_vfm_short}).

\subsection{Flow Model and Conditional Gaussian Paths}
\label{subsec:fm_cfm}

A flow model generates samples by \emph{moving} a point over time from an initial distribution (usually a standard Gaussian) to the data distribution~\cite{lipman2023flowmatchinggenerativemodeling,albergo2023stochasticinterpolantsunifyingframework}. In common practice, we start from noise $x_0 \sim \mathcal{N}(0,I)$ and evolve it with a learnt time-dependent vector field $v_t^\theta$ to obtain data $x_1 \sim p_{data}$ based on ODE:
\begin{equation}
\frac{d x_t}{dt} = v_t^\theta(x_t), \quad t \in [0,1].
\label{eq:ode_gen}
\end{equation}

Intuitively, $v_t^\theta$ tells each point ``where to move next'' so that the evolving distribution matches the data distribution at $t=1$. Training $v_t^\theta$ can use \emph{conditional} flow~\cite{holderrieth2025lecturenotes}, which constructs a conditional probability path anchored at an endpoint $x_1\sim p_{\text{data}}$~\cite{lipman2023flowmatchinggenerativemodeling}. The most convenient choice is a Gaussian conditional path:
\begin{equation}
x_t=\alpha_t x_1+\sigma_t x_0,\quad x_0\sim \mathcal{N}(0,I),
\label{eq:gaussian_path}
\end{equation}

which implies $p_t(x_t\mid x_1)=\mathcal{N}(\alpha_t x_1,\sigma_t^2 I)$. The path is defined by interpolants $(\alpha_t,\sigma_t)$. There are two common interpolants, namely optimal transport (OT) which is defined as a straight path and variance-preserving diffusion (VP) path~\cite{lipman2023flowmatchinggenerativemodeling}. Both paths are defined as
\begin{align}
    \alpha_t^{\text{OT}}=t,\quad \sigma_t^{\text{OT}}=1-(1-\sigma_{\min})t \\
\alpha_t^{\text{VP}}=\sqrt{\bar{\alpha}(1-t)}, \quad
\sigma_t^{\text{VP}}=\sqrt{1-(\alpha_t^{\text{VP}})^2}
\end{align}

with $\bar{\alpha}(t)=\exp\!\Big(-\int_0^{t}\beta(s)\,ds\Big)$. The interpolants move the process from a noise-dominated state at $t=0$ toward the data endpoint at $t=1$. Generation reduces to solving  ODE in~\eqref{eq:ode_gen} (or an SDE variant), so the main engineering questions are what the network should learn, which path to follow, and how the solver and compute budget affect quality.




\subsection{Targets: Velocity, Score, Noise, and Posterior}
\label{subsec:targets_vfm_short}

Differentiating the Gaussian probability path in Equation~\eqref{eq:gaussian_path} with respect to time yields a tractable conditional velocity target~\cite{lipman2023flowmatchinggenerativemodeling}:
\begin{equation}
u_t(x_t\mid x_1)=\dot{\alpha}_t x_1+\dot{\sigma}_t x_0.
\label{eq:cond_velocity}
\end{equation}
Flow matching regresses the model velocity $v_t^\theta(x_t)$ to $u_t(x_t\mid x_1)$. Furthermore, the same Gaussian probability path also provides a closed-form conditional score,
\begin{equation}
\nabla_{x_t}\log p_t(x_t\mid x_1)=-\frac{x_t-\alpha_t x_1}{\sigma_t^2},
\label{eq:cond_score}
\end{equation}
which leads to score matching by regressing a predicted score $s_t^\theta(x_t)$ to this target~\cite{song2019generative}. Finally, DDPM-style noise prediction follows directly from Equation~\eqref{eq:gaussian_path} by regressing the predicted noise $\epsilon_t^\theta(x_t)$ with the initial noise, $x_0=(x_t-\alpha_t x_1)/\sigma_t$~\cite{ho2020denoising,lipman2023flowmatchinggenerativemodeling}. 
These parameterisations are related under Gaussian paths, differing mainly by time-dependent scalings. In practice, they can behave differently under finite-step sampling and limited compute budgets, which motivates our empirical comparisons.

While FM, SM, and NM specify a direct regression target at time $t$, variational flow matching (VFM)~\cite{eijkelboom2024variationalflowmatchinggraph,guzman-cordero2025exponential} replaces direct target regression with posterior inference over the endpoint. Specifically, VFM learns a variational posterior $q_t^\theta(x_1\mid x_t)$ and recovers the velocity by conditional expectation:
\begin{equation}
v_t^\theta(x_t)=\mathbb{E}_{q_t^\theta(x_1\mid x_t)}\!\left[u_t(x_t\mid x_1)\right].
\label{eq:vfm_velocity}
\end{equation}
In the Gaussian distribution, $q_t^\theta(x_1\mid x_t)$ is parameterised by a mean $\mu_t^\theta(x_t)$, which can be interpreted as predicting an endpoint. We show interface for deterministic (ODE) and stochastic (SDE) samplers in later sections.

\section{Designing Flow Models on Latent Variables: Theory and Implementation}
\label{sec:design_latent}

We study latent-space flow models for tabular synthesis by fixing the latent representation and varying a small set of design axes that affect utility, disclosure risk, and runtime. Given fixed model architecture, we vary five design axes that matter for configuration and evaluation:
\begin{itemize}
    \item \textbf{Learning target:} what the network predicts during training (velocity, score, noise, or posterior mean).
    \item \textbf{Path:} OT versus VP, which determines the interpolants $(\alpha_t,\sigma_t)$.
    \item \textbf{Sampling dynamics:} deterministic ODE versus stochastic SDE.
    \item \textbf{Solver family:} Euler vs Midpoint.
    \item \textbf{Sampling budget:} integration endpoint and steps.
\end{itemize}





\subsection{Training Pipeline}
\label{subsec:latent_training}


Let \(z_1 \sim p_{\mathcal{Z}}\) denote the latent representation of a preprocessed real record \(x^{\text{data}}\), obtained from a pretrained VAE encoder, \(z_1=\mathrm{Enc}(x^{\text{data}})\). During training, we sample \(z_0 \sim \mathcal{N}(0,I)\) and \(t \sim \mathrm{Uniform}(0,1)\), then construct the interpolated latent state
\begin{equation}
z_t = \alpha_t z_1 + \sigma_t z_0 .
\label{eq:latent_path}
\end{equation}
Let \(\text{F}_t^\theta(z_t)=\mathrm{NN}_\theta(z_t,t)\) denote the network output. All target parameterisations use this same latent interpolation, but differ in the training target and in how the velocity and score components required for sampling are recovered, as summarised in Table~\ref{tab:latent_objectives}. The full training algorithm is provided in Appendix~\ref{sec:algorithm}.


\subsection{Learning Targets}
\label{subsec:objectives_table}

First, note two tractable conditional quantities under Equation~\eqref{eq:latent_path}:
\begin{equation}
u_t(z_t\mid z_1)=\dot{\alpha}_t z_1+\dot{\sigma}_t z_0,
\label{eq:cond_velocity_z}
\end{equation}
\begin{equation}
\nabla_{z_t}\log p_t(z_t\mid z_1)=-\frac{z_t-\alpha_t z_1}{\sigma_t^2}.
\label{eq:cond_score_z}
\end{equation}
We use these as targets for training FM and SM, respectively. In contrast, NM uses $z_0$ and VFM uses endpoint $z_1$ as a target. At sampling time, ODE solvers require a velocity field, while SDE solvers require both a velocity and a score. The mapping used for each variant is defined in Section~\ref{subsec:sampling_maps} and listed in Table~\ref{tab:latent_objectives}.

\begin{table}[ht]
\centering
\small
\setlength{\tabcolsep}{4pt}
\renewcommand{\arraystretch}{1.15}
\caption{Learning target for latent flow models. The table shows what the model predicts, the training loss, and how we obtain the components required by ODE and SDE sampling.}
\label{tab:latent_objectives}
\begin{tabular}{p{1cm} p{2.5cm} w{c}{3.8cm} p{1.5cm} p{1.5cm}}
\toprule
{Flow model} & {Model output $\text{F}_t^\theta$} & {Training loss} & {$v_t^\theta$} & {$s_t^\theta$} \\
\midrule
FM &
Velocity $v_t^\theta(z_t)$ &
$\mathbb{E}\|\text{F}_t^\theta(z_t)-u_t(z_t\mid z_1)\|_2^2$ &
$\text{F}_t^\theta(z_t)$ &
Equation \eqref{eq:score_from_velocity} \\
\addlinespace
SM &
Scaled score $\text{F}_t^\theta(z_t)=\sigma_t\, s_t^\theta(z_t)$ &
$\mathbb{E}\big[\sigma_t^2\| \tfrac{\text{F}_t^\theta(z_t)}{\sigma_t}+\tfrac{z_t-\alpha_t z_1}{\sigma_t^2}\|_2^2\big]$ &
Equation \eqref{eq:velocity_from_score} &
$\text{F}_t^\theta/\sigma_t$ \\
\addlinespace
NM &
Noise $\epsilon_t^\theta(z_t)$ &
$\mathbb{E}\big[\|\text{F}_t^\theta(z_t)-z_0\|_2^2\big]$ &
Equation \eqref{eq:velocity_from_score} &
$-\text{F}_t^\theta/\sigma_t$ \\
\addlinespace
VFM &
Posterior mean $\mu_t^\theta(z_t)$&
$\mathbb{E}\big[\lambda(t)\|\text{F}_t^\theta(z_t)-z_1\|_2^2\big]$ &
Equation \eqref{eq:velocity_from_mean} &
Equation \eqref{eq:score_from_mean} \\
\bottomrule
\end{tabular}
\end{table}

\subsection{Sampling and Parameterisation Mappings}
\label{subsec:sampling_maps}

We generate a latent sample by drawing $\tilde z_0\sim\mathcal{N}(0,I)$ and simulating the ODE/SDE process to obtain latent samples $\tilde z_1$. Finally, latent samples generated by the flow model are passed through the fixed VAE decoder to reconstruct synthetic tabular records, followed by inverse preprocessing to obtain the original format of the records.

\subsubsection{A shared identity under Gaussian paths.}
\label{subsec:vel_score_prop}

The following identity links velocity and score under Gaussian paths and provides a principled conversion between parameterisations.


\begin{proposition}[Velocity--score equivalence; adapted from~\cite{holderrieth2025lecturenotes}]
\label{prop:vel_score}
Let $z_t = \alpha_t z_1 + \sigma_t z_0, \;\;z_0 \sim \mathcal{N}(0, I)$
define a Gaussian probability path. Then, for any $t$ such that $\alpha_t \neq 0$ and $\sigma_t > 0$, the conditional velocity field $u_t(z_t \mid z_1)$ and the conditional score function $\nabla_{z_t} \log p_t(z_t \mid z_1)$ satisfy
\begin{equation}
u_t(z_t \mid z_1)
=
\left(
\sigma_t^2 \frac{\dot{\alpha}_t}{\alpha_t}
-
\dot{\sigma}_t \sigma_t
\right)
\nabla_{z_t} \log p_t(z_t \mid z_1)
+
\frac{\dot{\alpha}_t}{\alpha_t} z_t.
\label{eq:vel_score_relation}
\end{equation}
\end{proposition}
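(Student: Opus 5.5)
The plan is to verify the identity by direct substitution, expressing both sides in terms of the pair $(z_1, z_0)$ that generates $z_t$. This works because both conditional quantities are explicit affine functions of that pair. From Equation~\eqref{eq:cond_velocity_z}, the left-hand side is $u_t(z_t\mid z_1)=\dot{\alpha}_t z_1+\dot{\sigma}_t z_0$. From Equation~\eqref{eq:cond_score_z}, combined with $z_t-\alpha_t z_1=\sigma_t z_0$, the conditional score simplifies to $\nabla_{z_t}\log p_t(z_t\mid z_1)=-z_0/\sigma_t$. This step needs $\sigma_t>0$, so that the conditional density $\mathcal{N}(\alpha_t z_1,\sigma_t^2 I)$ is non-degenerate and the score is defined.

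Next I will substitute these into the right-hand side of Equation~\eqref{eq:vel_score_relation}, writing $z_t=\alpha_t z_1+\sigma_t z_0$:
\begin{equation*}
\left(\sigma_t^2\frac{\dot{\alpha}_t}{\alpha_t}-\dot{\sigma}_t\sigma_t\right)\left(-\frac{z_0}{\sigma_t}\right)+\frac{\dot{\alpha}_t}{\alpha_t}\left(\alpha_t z_1+\sigma_t z_0\right).
\end{equation*}
Expanding gives $-\sigma_t\frac{\dot{\alpha}_t}{\alpha_t}z_0+\dot{\sigma}_t z_0+\dot{\alpha}_t z_1+\sigma_t\frac{\dot{\alpha}_t}{\alpha_t}z_0$. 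The two $\sigma_t\dot{\alpha}_t/\alpha_t\, z_0$ terms cancel, leaving $\dot{\alpha}_t z_1+\dot{\sigma}_t z_0=u_t(z_t\mid z_1)$. The hypothesis $\alpha_t\neq 0$ is used only so that the ratio $\dot{\alpha}_t/\alpha_t$ is defined.

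An alternative derivation, which also explains where the formula comes from, is to solve for the unobserved quantities. The score expression gives $z_0=-\sigma_t\nabla\log p_t$. The path then gives $z_1=(z_t-\sigma_t z_0)/\alpha_t=(z_t+\sigma_t^2\nabla\log p_t)/\alpha_t$. Substituting both into $\dot{\alpha}_t z_1+\dot{\sigma}_t z_0$ and collecting the coefficient of the score yields exactly $\sigma_t^2\dot{\alpha}_t/\alpha_t-\dot{\sigma}_t\sigma_t$. I would present this version, since it shows the identity is a re-expression of the velocity in terms of $(z_t,\text{score})$ rather than $(z_1,z_0)$.

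There is no real obstacle; the calculation is routine. The one point that needs care is interpretation: the identity holds pointwise for each conditioning endpoint $z_1$, with $z_t$ treated as the free variable. Because the coefficients depend only on $t$, the identity is linear in the score. It therefore survives taking the conditional expectation over $z_1$ given $z_t$, which is what later justifies using the same relation between marginal velocity and marginal score for the conversions in Table~\ref{tab:latent_objectives}. I would mention this as a remark after the proof, without proving it in full.
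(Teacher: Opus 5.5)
Your proposal is correct, and the version you say you would present (writing $z_0=-\sigma_t\nabla_{z_t}\log p_t(z_t\mid z_1)$ and $z_1=(z_t+\sigma_t^2\nabla_{z_t}\log p_t(z_t\mid z_1))/\alpha_t$, then substituting into $\dot{\alpha}_t z_1+\dot{\sigma}_t z_0$) is essentially the paper's proof, which likewise eliminates $z_0$ and $z_1$ in favour of $z_t$ and the conditional score. Your direct-substitution check verifies the same algebra in the opposite direction, and your closing remark about passing to marginals goes beyond what the proposition claims but is sound.
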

The proof is in Appendix~\ref{ssec:proof-velscore} of the supplementary material. However, this theoretical equivalence does not necessarily imply identical finite-step sampling behaviour. In practice, model approximation error, discretisation error, solver choice, time-step schedules, and limited compute budgets can lead to different empirical behaviour under finite-step sampling. Proposition~\ref{prop:vel_score} allows us to compare targets fairly under matched sampling definitions.

\subsubsection{Deterministic sampling (ODE).}
\label{subsec:ode_sampling}

ODE sampling integrates
\begin{equation}
d\tilde z_t = v_t^\theta(\tilde z_t)\,dt,\quad \tilde z_0\sim\mathcal{N}(0,I).
\label{eq:ode_sampling}
\end{equation}
For FM, the velocity is predicted directly:
\begin{equation}
v_t^\theta(\tilde z_t)=\text{F}_t^\theta(\tilde z_t).
\label{eq:ode_velocity_fm}
\end{equation}
For SM or NM, we first obtain a score $s_t^\theta(\tilde z_t)$ as in Table~\ref{tab:latent_objectives}, then convert the score to velocity via Proposition~\ref{prop:vel_score}:
\begin{equation}
v_t^\theta(\tilde z_t)=
\Big(\sigma_t^2 \frac{\dot{\alpha}_t}{\alpha_t}-\dot{\sigma}_t \sigma_t\Big)s_t^\theta(\tilde z_t)
+\frac{\dot{\alpha}_t}{\alpha_t}\tilde z_t .
\label{eq:velocity_from_score}
\end{equation}
For VFM, the mean $\mu_t^\theta(\tilde z_t)$ yields the velocity of Equation~\eqref{eq:vfm_velocity}:
\begin{equation}
v_t^\theta(\tilde z_t)=
\Big(\dot{\alpha}_t-\frac{\dot{\sigma}_t}{\sigma_t}\alpha_t\Big)\mu_t^\theta(\tilde z_t)
+\frac{\dot{\sigma}_t}{\sigma_t}\tilde z_t .
\label{eq:velocity_from_mean}
\end{equation}

\subsubsection{Stochastic sampling (SDE).}
\label{subsec:sde_sampling}

SDE sampling integrates
\begin{equation}
d\tilde z_t =
\Big[v_t^\theta(\tilde z_t)+\tfrac{g_t^2}{2}s_t^\theta(\tilde z_t)\Big]dt + g_t\,dW_t,
\label{eq:sde_sampling}
\end{equation}

\noindent For SM and NM, $s_t^\theta$ is obtained directly from the model output as listed in Table~\ref{tab:latent_objectives}. In contrast, FM recovers the score by inverting Equation~\eqref{eq:vel_score_relation}:
\begin{equation}
s_t^\theta(\tilde z_t)=
\frac{\alpha_t v_t^\theta(\tilde z_t)-\dot{\alpha}_t \tilde z_t}
{\sigma_t^2 \dot{\alpha}_t-\alpha_t \dot{\sigma}_t \sigma_t}.
\label{eq:score_from_velocity}
\end{equation}
For VFM, the score can be calculated using Equation~\eqref{eq:score_from_mean}.
\begin{equation}
s_t^\theta(\tilde z_t)= -\frac{\tilde z_t-\alpha_t \mu_t^\theta(\tilde z_t)}{\sigma_t^2},
\label{eq:score_from_mean}
\end{equation}

Algorithm~\ref{alg:TabSynFlow-s} summarises the sampling process of latent flow models in general. In brief, sampling starts from Gaussian noise, integrates the learned latent dynamics using the selected ODE or SDE solver, and decodes the generated latent sample using the fixed VAE decoder.

    \begin{algorithm}[ht]
        \caption{Latent flow sample generation}
        \label{alg:TabSynFlow-s}
        \textbf{Input:} Trained latent flow model \(NN(\cdot)\), discrete step interval $\Delta$, trained decoder from VAE $\mathrm{Dec}(.)$, integrator (ODE or SDE), starting time $t_{\mathrm{start}}$, end time $t_{int}$ 
        \begin{algorithmic}[1]
            \State $\tilde{z}_0 \sim \mathcal{N}(0, I)$.
            \State $t=t_{\mathrm{start}}$
            \While{$t \le t_{int}$}
            \State $\text{F}_t^{\theta}(\tilde{z}_t) = \mathrm{NN}(\tilde{z}_t, t)$
            \State Calculate $v_t^\theta(\tilde{z}_t)$ \Comment{both ODE and SDE} 
            \If {SDE} Calculate {$s_t^\theta(\tilde{z}_t)$} \Comment{SDE only} \EndIf
            \State Update $\tilde{z}_{t+\Delta}$ using integrator.
            \State $t \textrm{+=} \Delta$
            \EndWhile
            \State $\tilde{x}^{\text{data}}=\mathrm{Dec}(\tilde{z}_{t_{int}})$ \Comment{Decode using pretrained VAE decoder}
        \end{algorithmic}
        \Return{$\tilde{\mathcal{D}}=\mathrm{InverseTransform}(\tilde{x}^{\text{data}})$}
    \end{algorithm}

\section{Experimental Results}
\label{sec:results}

This section reports empirical findings, covering the experiment setup and evaluation (Sections~\ref{sec:exp_setup} and ~\ref{sec:evaluation}), a comparison of learning targets, paths, sampling dynamics, and solvers (Section~\ref{sec:res-q123}), as well as integration time and step budget (Sections~\ref{sec:res-t-ode} and \ref{sec:res-nfe}).

\subsection{Experimental Setup}
\label{sec:exp_setup}

We adopt the experimental framework from previous works~\cite{nasution2025flowmatchingtabulardata,guzman-cordero2025exponential} with adaptations specific to the objectives of this study. Tables~\ref{tab:hyperparams_train} and~\ref{tab:baseline_hyperparams} in Appendix~\ref{sec:exp_details} of the supplementary material summarise the implementation details.

\textbf{Datasets. }We evaluate seven tabular datasets that span census and machine learning benchmarks (Table~\ref{tab:dataset}). We use five public census datasets (UK, Canada, Fiji, Rwanda, Indonesia), which can be obtained from the IPUMS repository~\cite{ipums}. On the other hand, we also provide additional comparisons using UCI datasets commonly used in tabular generative model evaluations~\cite{guzman-cordero2025exponential,kotelnikov2023tabddpm}.

\begin{table}[h]
\centering
\caption{Summary of datasets used in this study.}
\begin{tabular}{w{c}{3cm} w{c}{1.7cm} w{c}{1.7cm} w{c}{1.5cm} w{c}{2cm}}
\toprule
{Dataset} & {Records} & {\#Num.} & {\#Cat.} & {Source} \\
\midrule
UK Census & 104,267 & 1 & 14 & IPUMS \\
Canada Census & 32,149 & 4 & 21 & IPUMS \\
Fiji Census & 84,323 & 1 & 18 & IPUMS \\
Rwanda Census & 31,455 & 1 & 12 & IPUMS \\
Indonesia Census & 177,429 & 1 & 12 & IPUMS \\
Adult & 48,842 & 5 & 10 & UCI Repo \\
Churn & 10,000 & 4 & 7 & UCI Repo \\
\bottomrule
\end{tabular}
\label{tab:dataset}
\end{table}


\textbf{Network architecture. }We use the Transformer-based VAE encoder--decoder from TabSyn~\cite{zhang2024mixedtype}. All flow models use an MLP with four hidden layers of widths [1024, 2048, 2048, 1024], with SiLU activations and no batch normalisation. We encode time using a 512-dimensional sinusoidal embedding~\cite{nasution2025flowmatchingtabulardata}.

\textbf{Training setup. }We begin by preprocessing the data. We transform numerical variables using quantile transformer, and categorical variables using one-hot encoding. We then feed the preprocessed data to VAE. We follow $\beta$-VAE training setup from~\cite{zhang2024mixedtype} who trained the VAE for 4000 epochs. Using the trained VAE encoder, we encode the data into its latent space representation. Subsequently, we used the flow models we discussed in Section~\ref{sec:design_latent} to learn the latent distribution. We used both OT and VP paths for all algorithms; therefore, there are \textit{eight} models to compare. For each model, we used the model with the lowest training loss in 10,000 epochs, consistent with previous tabular synthesis work~\cite{zhang2024mixedtype,guzman-cordero2025exponential,nasution2025flowmatchingtabulardata}. This keeps the training budget consistent across latent flow variants.

\textbf{Sampling and integration.}
Unless stated otherwise, we sample with an ODE solver using Euler integration for 100 steps up to $t_{\mathrm{int}}=1$. For numerical stability, VP-based runs stop at $t_{\mathrm{int}}=1-10^{-5}$ to avoid the numerical issues when converting to a score near $t=1$~\footnote{This avoids division by $\sigma_t$ when converting to a score near $t=1$, primarily for SM, NM, and VFM.}. For OT sampling with NM and SM, we additionally clamp the start time to $t_{\mathrm{start}}=10^{-2}$ (see Appendix~\ref{ssec:why-ot-path} of the supplementary material for motivation). For SDE sampling and solver comparisons, we decouple the injected noise from the path variance to keep the diffusion term non-negligible. We use $g_t=\sigma_t^{\mathrm{VP}}$ on the OT path and $g_t=\sigma_t^{\mathrm{OT}}$ on the VP path.

\textbf{External comparison baselines.}
To understand the position of the latent flow models in generative models, we also compared our representative configurations with standard tabular synthesisers from major model families, following common baseline choices in recent tabular generative modelling studies~\cite{mueller2025continuous,xu2019modeling}. We compare against Bayesian Network~\cite{Ankan2024pgmpy}, Tabular Variational Autoencoder (TVAE) and Robust TVAE (RTVAE)~\cite{xu2019modeling,Akrami2022robust}, Conditional Tabular Generative Adversarial Networks (CTGAN)~\cite{xu2019modeling}, and Normalising Flows~\cite{durkan2019neural,qian2023synthcity}. Using the \texttt{synthcity} implementations~\cite{qian2023synthcity}, we keep algorithm-specific hyperparameters at their default values and only adjust shared training-budget settings where supported. These include batch size, maximum number of iterations, and network architecture when the implementation exposes directly configurable architecture settings. Further, we also include TabDDPM and TabSyn as modern diffusion-based references, with results taken from~\cite{nasution2025flowmatchingtabulardata} under the same evaluation protocol.


\subsection{Evaluation Metrics}
\label{sec:evaluation}
Following established practice in synthetic tabular data evaluation~\cite{taub2019the,elliot2023samples,Little2022comparing,patki2016SDV}, we focus primarily on two dimensions: analytical utility and disclosure risk. Our evaluation is motivated by a practical pre-release setting in which a data owner may select a synthetic tabular data generator before sharing data for external analysis or research access. In this setting, synthetic data is useful only if it preserves statistical properties required for downstream analysis, but it should not introduce unnecessary disclosure risk. 

\textbf{Utility (Preservation of Statistical Properties):}
Utility value ranges from 0 (no preservation) to 1 (perfect preservation), computed as:
\begin{equation}
\text{U} = \frac{1}{3} \left( \text{ROC}_{\text{univ}} + \text{ROC}_{\text{biv}} + \text{CIO} \right),
\end{equation}
where ROC measures frequency preservation (univariate and bivariate) and CIO measures statistical inference validity via confidence interval overlap.

\textbf{Disclosure Risk:}
Disclosure risk ranges from 0 to 1 and measures the normalised increase in empirical attribution success above a baseline adversary who only uses the original-data equivalence class structure. We use Targeted Correct Attribute Probability (TCAP) to evaluate a record-linkage-style attribute disclosure scenario, where an adversary knows a set of key variables and attempts to infer sensitive attributes using the synthetic data. The reported value should therefore be interpreted as an empirical disclosure risk diagnostic rather than a general privacy risk measure or release-safety guarantee. For example, $R=0.2$ does not mean that the dataset has a 20\% absolute privacy risk. Instead, it means that, relative to the baseline attribution uncertainty, the synthetic data accounts for 20\% of the remaining possible increase in empirical attribution success under the specified key and sensitive variables.

\textbf{Additional Diagnostic Metrics:} In addition to utility and risk, we also report Shape and Trend as complementary fidelity diagnostics that are widely used in synthetic data evaluation~\cite{zhang2024mixedtype,guzman-cordero2025exponential,mueller2025continuous}. Shape measures column-wise distributional agreement between real and synthetic data, while Trend measures how well pairwise relationships between variables are preserved. These metrics complement utility and disclosure risk by capturing marginal and relational fidelity in the generated tabular data. We report these metrics as means over 20 independently generated synthetic datasets. Additional and detailed evaluations, including $\alpha$-precision, $\beta$-recall, and Wasserstein distance, are reported in the appendix due to space constraints (Appendices~\ref{sec:evaluation_details} and~\ref{app:detail-res}).

\subsection{Configuration Study: Targets, Paths, Sampling Dynamics, and Solvers}
\label{sec:res-q123}

The paired OT--VP plots in Figure~\ref{fig:paired_ot_vp_all} and Table~\ref{tab:q123_summary} summarise how latent tabular flow models behave under three practical design choices: the \emph{learning target} (FM, SM, NM, VFM), the \emph{path} (OT vs.\ VP), and the \emph{sampling implementation} (ODE vs.\ SDE, and Euler vs.\ Midpoint)~\footnote{We used 100 integration steps. As a consequence, Euler and Euler--Maruyama use one model evaluation per step, while Midpoint uses two, so Midpoint has roughly twice the evaluation cost at the same step count.}. We report utility and disclosure risk as primary decision metrics, and use shape and trend as secondary fidelity checks. Because aggregate metric values can be affected by dataset-specific scale and structure, we also report per-dataset, rank-based, and pareto front summaries in Appendix~\ref{app:rank-pareto} of the Supplementary Material.

\begin{figure}[h]
    \centering
    \includegraphics[trim={0cm 0cm 0cm 0cm}, clip, width=\linewidth]{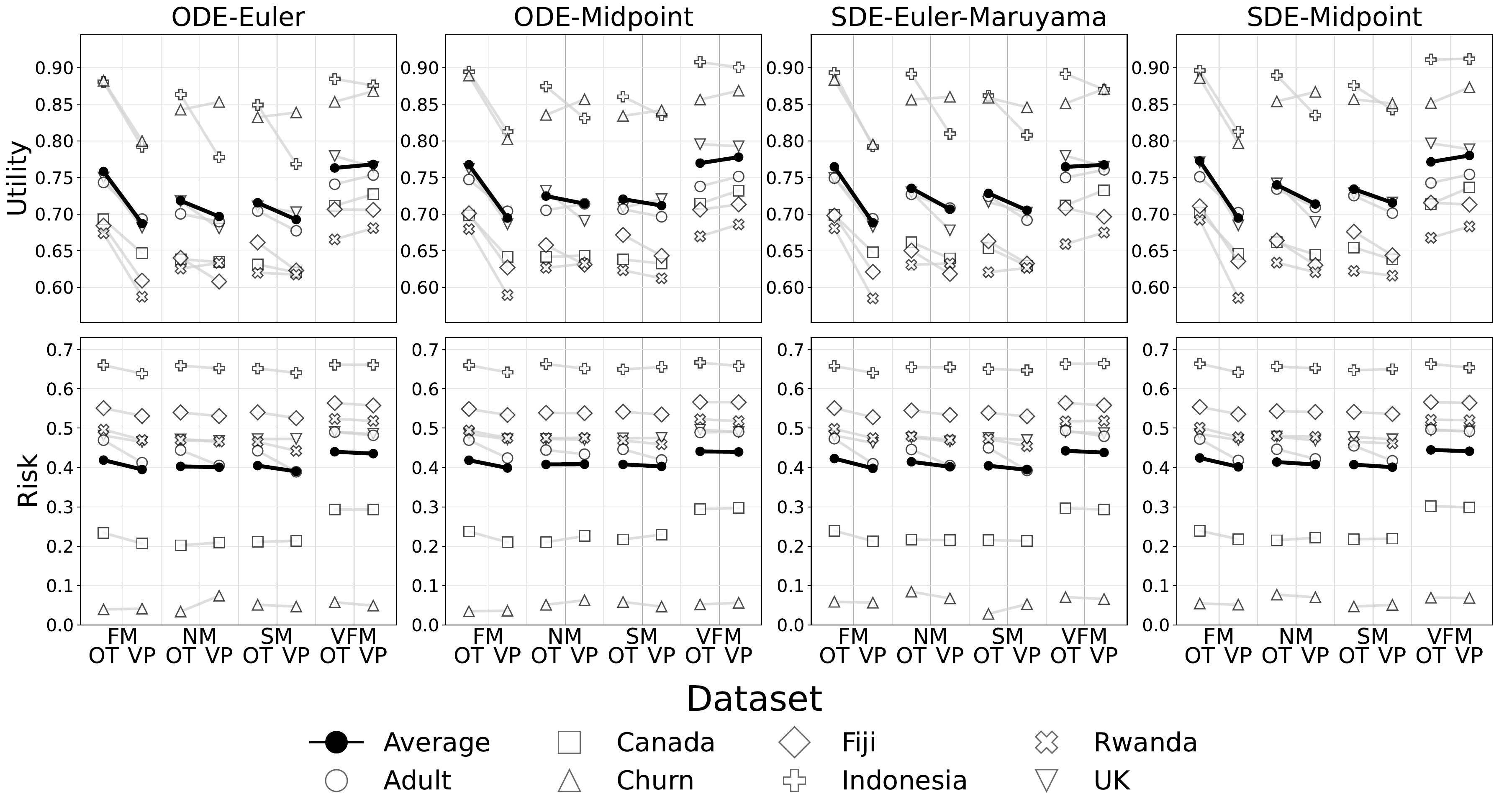}
    \caption{Paired OT vs.\ VP comparisons across learning targets (FM, NM, SM, VFM) and integrators. Each line connects the same dataset under OT and VP. Each hollowed shape corresponds to the datasets while the black, solid line represents the \textbf{average} values across all datasets. Top: Utility, bottom: disclosure risk. On average, FM, NM, and SM tend to favour the OT path in utility, whereas VFM is more competitive across OT and VP. Midpoint and SDE sampling shift the operating point, with effects that depend on the target and dataset.}
    \label{fig:paired_ot_vp_all}
\end{figure}


\begin{table}[t]
\centering
\caption{Aggregate Shape and Trend comparison addressing Q3 across learning targets, sampling dynamics, and numerical solvers. Values are averaged over 20 runs per dataset and then averaged across datasets. Lower values are better. Bold and underlined entries denote the best and second-best integrator setting for each target-path combination, respectively.}
\label{tab:q123_summary}
\begin{tabular}{w{c}{1.5cm} w{c}{2cm} w{c}{2cm} w{c}{2cm} w{c}{2cm} w{c}{2cm}} 
\toprule
Evaluation & Algorithm & ODE-Euler & ODE-Midpoint & SDE-EM & SDE-Midpoint\\
\midrule
\multirow{8}{*}{Shape (\%,$\downarrow$)} & FM-OT & 1.2329 & \underline{1.1796} & 1.1816 & \textbf{1.1537} \\
 & FM-VP & 1.9641 & \textbf{1.9260} & 1.9595 & \underline{1.9346} \\
 & NM-OT & 1.6587 & 1.5818 & \underline{1.3962} & \textbf{1.3484} \\
 & NM-VP & 1.9462 & \underline{1.5993} & 1.7573 & \textbf{1.5845} \\
 & SM-OT & 1.6847 & 1.5709 & \underline{1.4109} & \textbf{1.3656} \\
 & SM-VP & 1.8760 & \underline{1.6242} & 1.7145 & \textbf{1.5958} \\
 & VFM-OT & 1.3813 & \underline{1.3792} & \textbf{1.3703} & 1.3869 \\
 & VFM-VP & \textbf{1.2621} & 1.2935 & \underline{1.2705} & 1.2992 \\
\midrule
\multirow{8}{*}{Trend (\%,$\downarrow$)} & FM-OT & 2.6514 & \underline{2.5246} & 2.5604 & \textbf{2.4821} \\
 & FM-VP & 3.8475 & \textbf{3.7307} & 3.8586 & \underline{3.7479} \\
 & NM-OT & 3.2847 & 3.1623 & \underline{2.9751} & \textbf{2.8352} \\
 & NM-VP & 3.5545 & \underline{3.0452} & 3.3029 & \textbf{3.0073} \\
 & SM-OT & 3.3477 & 3.1546 & \underline{2.9753} & \textbf{2.8713} \\
 & SM-VP & 3.5654 & \underline{3.0739} & 3.3311 & \textbf{3.0319} \\
 & VFM-OT & \underline{2.6307} & \textbf{2.5351} & 2.6630 & 2.6504 \\
 & VFM-VP & 2.4862 & \textbf{2.4542} & 2.4962 & \underline{2.4555} \\
\bottomrule
\end{tabular}
\end{table}

\textbf{Q1 (Targets).}
Across datasets and integrators, FM-OT and VFM form a high-utility group, while NM and SM can be characterised as a risk-conservative but utility-limited group. FM typically generates synthetic data with strong utility and moderate disclosure risk. In contrast, NM and SM tend to produce lower-utility synthetic data, but their post-hoc disclosure risk evaluation indicates lower risk levels. VFM achieves high utility across integrator settings, although its disclosure risk profile is generally higher than NM and SM. This makes VFM more attractive when utility is prioritised, provided that the resulting disclosure risk remains within an acceptable threshold.

\textbf{Q2 (Paths: OT vs.\ VP).}
The comparison between interpolants or probability paths is target-dependent and mainly shows a consistent direction in the averaged paired comparisons. For FM, NM, and SM, switching from OT to VP generally decreases utility, but can also move the configuration toward a lower disclosure-risk regime, particularly for FM and SM. For VFM, VP tends to increase average utility relative to OT, while disclosure-risk changes are smaller than the corresponding utility shifts. Practically, OT is a more reliable default for FM when utility is prioritised, and it can also benefit NM and SM when used with a practical start-time clamp. By contrast, VFM is comparatively less sensitive to the path, with VP often slightly better.

\textbf{Q3 (Sampling dynamics and solver choice).}
Table~\ref{tab:q123_summary} shows that solver and sampling dynamics interact with the target--path choice. Midpoint integration often improves Shape and Trend errors over Euler at the same step count, although it requires roughly twice as many model evaluations. SDE-Midpoint is frequently the best or second-best setting, especially for NM and SM, but its advantage is not uniform: ODE sampling remains competitive for FM-VP and VFM. Figure~\ref{fig:paired_ot_vp_all} shows that utility and disclosure risk effects are more target- and dataset-dependent. Thus, stochastic sampling should be viewed as a calibration option rather than a universally better default, especially EM method.




\subsection{Integration Endpoint}
\label{sec:res-t-ode}

We vary the ODE integration endpoint $t_{\mathrm{int}} \in [0.6,1]$ to study the effect of early stopping during generation. Figure~\ref{fig:res_t_ode} reports averages across seven datasets as $t_{\mathrm{int}}$ increases from truncated to near-complete integration. Across all methods, utility increases with $t_{\mathrm{int}}$, accompanied by a simultaneous increase in disclosure risk. The path choice impacts truncation sensitivity. OT-based variants (FM-OT, VFM-OT) achieve high utility in the earlier stage ($t_{\mathrm{int}} \approx 0.7-0.8$) but also accumulate risk faster. In contrast, VP-based variants (FM-VP, SM, NM, VFM-VP) are more sensitive to truncation, requiring nearly full integration ($t_{\mathrm{int}} \approx 1.0$) to match the utility of OT paths. Among the methods, VFM (red lines) generally yields the highest utility envelope across the integration range, whereas SM and NM (orange/green dashed) lag slightly behind in the early stages. Dataset-specific results are reported in Appendix~\ref{sec:res-t-ode-detailed} of the supplementary material.

\begin{figure}[ht]
    \centering
    \includegraphics[trim={0cm 0cm 0cm 0cm}, clip, width=0.8\linewidth]{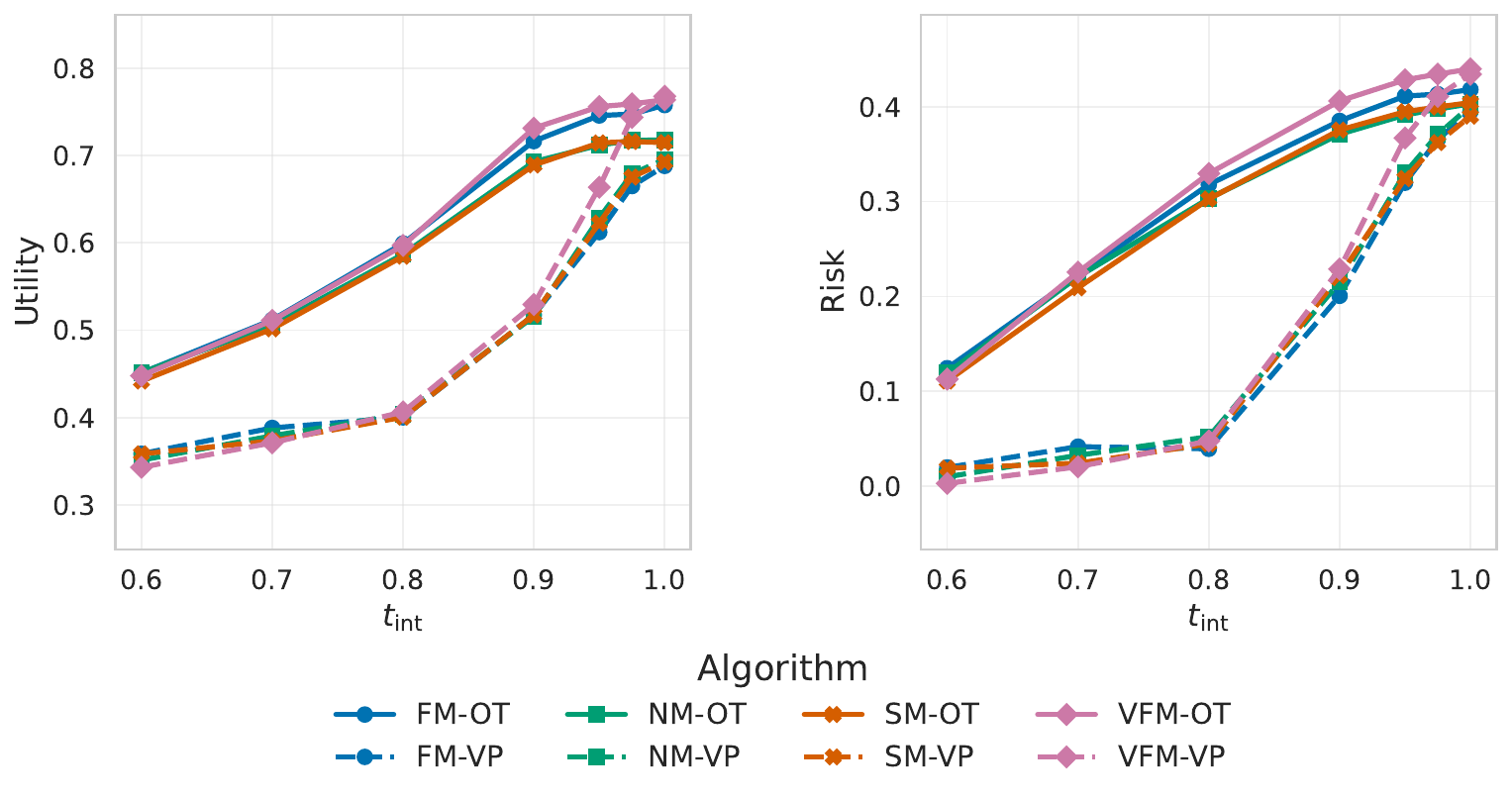}
    \caption{Average performance across seven datasets on integration time $t_{\mathrm{int}} \in [0.6, 1.0]$. The solid lines (OT) show consistently higher utility and risk at earlier stages compared to dashed lines (VP). The utility and risk increase monotonically as $t_{\mathrm{int}}\to 1$.}
    \label{fig:res_t_ode}
\end{figure}

\subsection{Integration Steps}
\label{sec:res-nfe}

We study sampling budget by varying the number of discrete steps during numerical integration. Figure~\ref{fig:res_nfe} reports average utility and disclosure risk across seven datasets for steps from 4 to 1024, with the vertical dashed line marking our default setting (100 steps). For interpretation, we group budgets into three regimes: \textbf{Phase I} (4--32 steps), \textbf{Phase II} (64--256 steps), and \textbf{Phase III} (512--1024 steps).

\begin{figure}[ht]
    \centering
    \includegraphics[trim={0cm 0cm 0cm 0cm}, clip,width=0.8\linewidth]{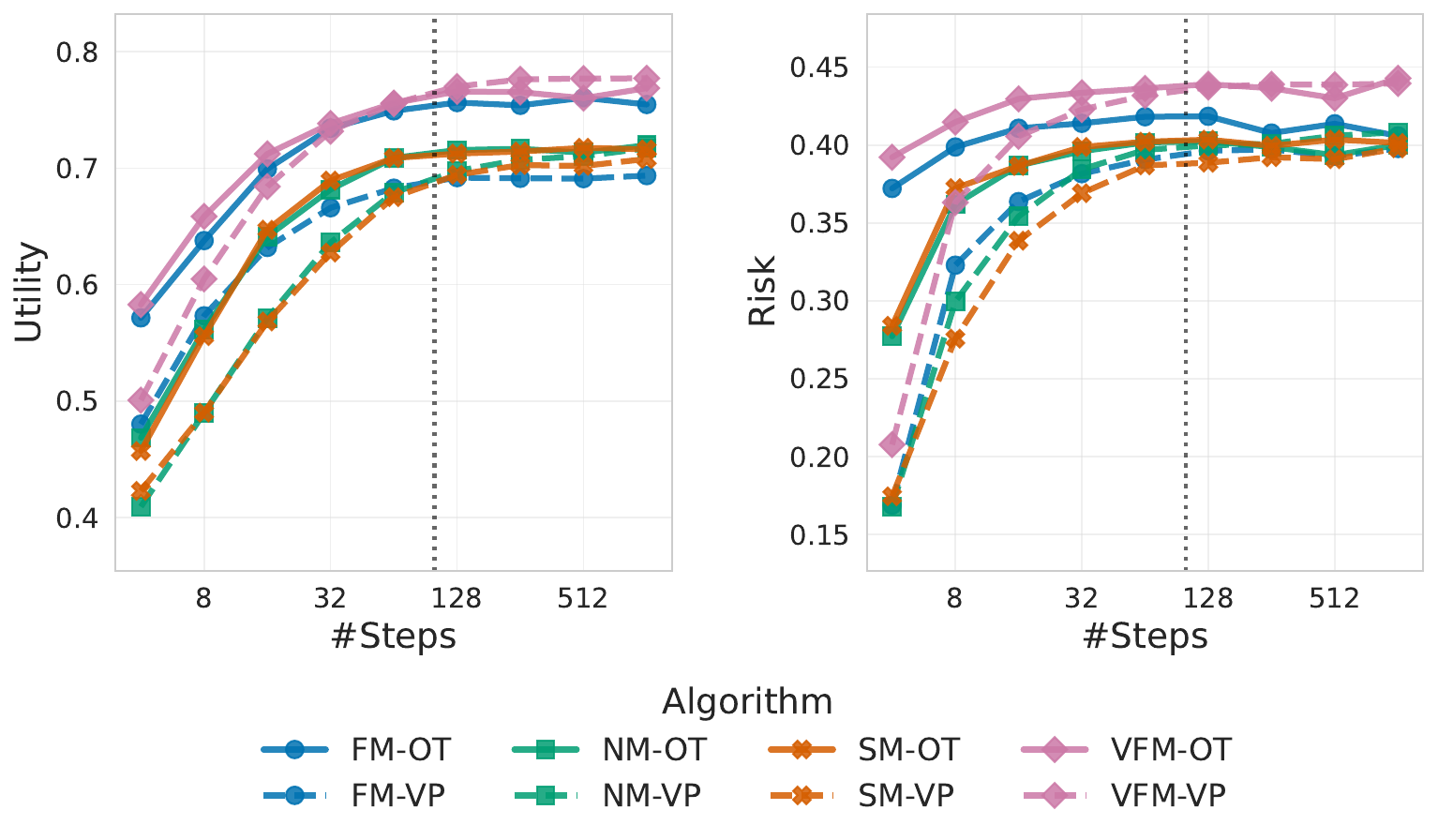}
    \caption{Average performance across seven datasets for varying steps ($2^n, n \in [2,10]$). The vertical line marks the baseline of 100 steps. OT paths (solid) achieve useful utility at lower budgets ($<32$ steps) compared to VP paths (dashed). Performance for all methods tends to converge and plateau beyond 100 steps.}
    \label{fig:res_nfe}
\end{figure}

Across methods, utility increases sharply in Phase I and then shows diminishing returns, while disclosure risk rises more steadily as budget increases. Differences between configurations are most pronounced under tight budgets: OT-based settings reach higher utility with fewer evaluations, whereas VP-based settings typically require substantially larger budgets to approach similar utility. Beyond the default range, performance gaps narrow and marginal utility gains become small while risk continues to increase. Overall, these trends support 100 steps as a reasonable default, with higher budgets best reserved for utility-first settings where additional cost and a modest increase in risk are acceptable. Dataset-specific results are reported in Appendix~\ref{sec:nfe-detailed} of the supplementary material.


\subsection{Comparison with Standard Tabular Synthesisers}
To contextualise the internal configuration study, we compare representative latent flow configurations with standard tabular synthesisers. Based on the internal results, we select FM-OT as a balanced latent flow configuration and VFM-VP as a utility-first configuration. Both are evaluated with the default ODE-Euler sampler, so that the external comparison reflects model-level differences.

Table~\ref{tab:external_baselines} contextualises the recommended latent flow configurations against standard tabular synthesisers. FM-OT and VFM-VP improve shape and trend errors over the non-diffusion baselines and achieve higher utility than the neural baselines, while remaining competitive with TabSyn. VFM-VP achieves the highest average utility and the lowest trend error, while FM-OT achieves the lowest shape error. The main trade-off is disclosure risk: lower-risk baselines such as RTVAE preserve less statistical structure. The comparison therefore supports viewing these methods as operating points on a utility--risk frontier rather than ranking them by a single metric.

\begin{table}[t]
\centering
\small
\setlength{\tabcolsep}{5pt}
\caption{Comparison with standard tabular synthesisers. Results are averaged across seven datasets. TabDDPM and TabSyn results are taken from~\cite{nasution2025flowmatchingtabulardata} under the same evaluation protocol. FM-OT and VFM-VP are representative latent flow configurations selected from the internal configuration study and evaluated with the default ODE-Euler sampler. Utility is higher-is-better, while disclosure risk, shape error, and trend error are lower-is-better.}
\label{tab:external_baselines}
\begin{tabular}{lcccc}
\toprule
\textbf{Model} & \textbf{Utility $\uparrow$} & \textbf{Risk $\downarrow$} & \textbf{Shape $\downarrow$} & \textbf{Trend $\downarrow$} \\
\midrule
Bayesian Network & 0.6275 & 0.3674 & 10.0305 & 17.0801\\
TVAE             & 0.5145 & 0.3392 & 11.4874 & 21.3613\\
RTVAE            & 0.4531 & \textbf{0.0563} & 18.4225 & 42.1489\\
CTGAN            & 0.5164 & 0.2484 & 14.8998 & 24.8076\\
Normalising Flow & 0.5149 & 0.2472 & 13.7603 & 20.8416\\
TabDDPM~\cite{nasution2025flowmatchingtabulardata}          & 0.5561 & 0.2620 & 17.0871 & 27.6257\\
TabSyn~\cite{nasution2025flowmatchingtabulardata}           & 0.7670 & 0.4314 & 1.2429 & 2.6557\\
\midrule
FM-OT            & 0.7582 & 0.4187 & \textbf{1.2329} & 2.6514\\
VFM-VP           & \textbf{0.7679} & 0.4352 & 1.2621 & \textbf{2.4862}\\
\bottomrule
\end{tabular}
\end{table}

\section{Discussions}

This section interprets the empirical results and derives practical guidance. Section~\ref{sec:discussion} synthesises the key findings across learning targets, probability paths, and sampling numerics, and Section~\ref{sec:practical_workflow} translates these findings into practical configuration guidance with actionable defaults for practitioners.

\subsection{Discussion of Findings}
\label{sec:discussion}

The performance of latent tabular synthesis is governed by three interacting choices: the learning objective, the probability path, and the sampling numerics.

\textbf{Targets set the utility--risk regime (\textbf{Q1}).}
FM and VFM occupy the higher-utility regime, while NM and SM are more conservative, trading lower utility for reduced disclosure risk. FM-OT emerges as a balanced choice. Meanwhile, VFM is more suitable when utility is prioritised within an acceptable risk threshold, but it generally has a less conservative disclosure risk profile than NM and SM.

\textbf{Path effects are target-dependent and asymmetric (\textbf{Q2}).}
OT and VP are not interchangeable across targets. Switching to VP tends to reduce utility for FM, NM, and SM, while often moving FM and SM toward more conservative disclosure-risk positions. VFM behaves differently: VP often improves utility relative to OT at broadly comparable risk. OT is therefore a more reliable default for FM, NM, and SM, whereas VFM benefits more reliably from VP in utility-first settings.

\textbf{Sampling numerics adjust the operating point (\textbf{Q3-Q5}).}
The results for Q3--Q5 show that sampling configurations should be treated as calibration decisions rather than fixed implementation details. For Q3, SDE sampling effects are more target- and dataset-dependent. Midpoint can improve fidelity over Euler at roughly doubled evaluation cost, whereas Euler remains a sensible default under tighter compute budgets. For Q4, the integration endpoint $t_{\mathrm{int}}$ controls an explicit utility--risk relationship: later stopping generally increases utility but also raises disclosure risk, with OT paths reaching higher utility earlier than VP paths. For Q5, increasing the number of integration steps improves utility most strongly under small budgets, but the gains diminish beyond the mid-range while disclosure risk can continue to increase. Together, these results support a risk-constrained configuration strategy: choose the target and path first, then tune solver, dynamics, endpoint, and step budget to meet the intended utility, fidelity, risk, and compute budget.

\subsection{Practical Configuration Guidance}
\label{sec:practical_workflow}

Figure~\ref{fig:workflow} summarises how the empirical findings can guide configuration choices before any domain-specific disclosure review. The workflow separates training-time design from sampling-time calibration. Based on our results, \textbf{FM-OT} is a balanced default, \textbf{VFM} is preferable when utility is prioritised, and \textbf{SM} or \textbf{NM} can be used for more conservative settings. For SM and NM under the OT path, one can avoid starting exactly at $t=0$ by using a small start-time clamp.

\begin{figure}[h]
\centering
\includegraphics[trim={0.5cm 0.5cm 0.5cm 0.5cm}, clip,width=0.8\linewidth]{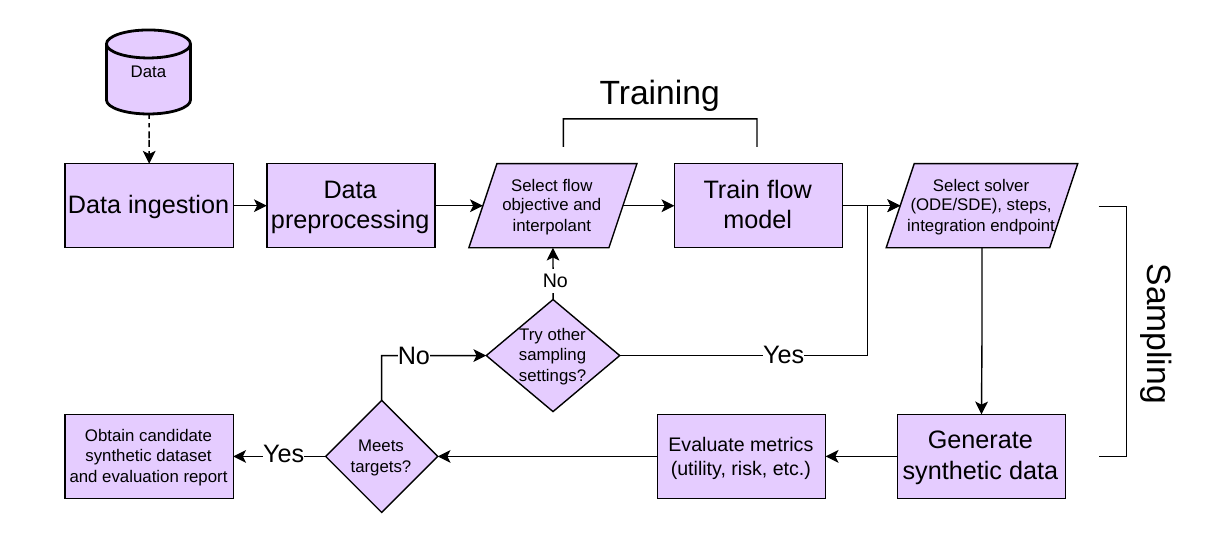}
\caption{Configuration workflow for latent flow tabular synthesis. A training-time design is selected first, then sampling settings such as ODE/SDE dynamics, solver, integration steps, and $t_{\mathrm{int}}$ are calibrated against pre-defined utility and disclosure risk constraints. The workflow produces a candidate synthetic dataset and evaluation report for subsequent domain-specific review.}
\label{fig:workflow}
\end{figure}

After training, sampling choices adjust the operating point under a fixed compute budget. Euler is the simplest default and uses one model evaluation per step, while Midpoint can improve fidelity at higher evaluation cost. SDE sampling often improves shape and trend errors in our aggregate results, but should be used only when the resulting disclosure risk remains acceptable. Integration steps and $t_{\mathrm{int}}$ provide additional controls over the utility and disclosure risk. In practice, users first impose a disclosure risk constraint, then select the most useful and faithful configuration within that constraint. Acceptable values of disclosure risk are therefore domain-dependent and should be treated as release-context constraints rather than universal safety thresholds.

\section{Concluding Remarks}
\label{sec:conclusion}

We presented a controlled empirical study of latent-space tabular flow models, showing that theoretically related learning targets and sampling dynamics can lead to different empirical behaviour under finite-step budgets. The utility--risk relationship is primarily governed by target and path choice: VFM-VP is a utility-first configuration, FM-OT provides a strong high-utility default with lower disclosure risk than VFM, and lower-risk regimes are generally obtained by accepting reduced utility, especially under VP-based or NM/SM configurations. Sampling choices then adjust the operating point: SDE sampling, particularly Midpoint can improve fidelity at higher evaluation cost, and OT paths enable compute savings or risk-aware synthesis through early stopping. Although the effect is target- and dataset-dependent, these findings support pre-release configuration and evaluation of latent flow tabular synthesis under domain-specific utility, risk, and compute constraints.

While the study provides practical configuration guidance, we also acknowledge several limitations. First, we set the VAE architecture to isolate the impact of flow dynamics as fixed. Second, utility--risk trajectories are dataset-dependent, so practitioners should perform target-domain validation before any release decision. Third, we restrict the analysis to fixed-step numerical solvers. Future work should evaluate broader architectures, adaptive ODE/SDE integrators, additional tabular domains, and mixed-effect modelling on the evaluation for more comprehensive insights.

\begin{credits}
\subsubsection{\ackname}
B.I. Nasution is supported by the Indonesia Endowment Fund for Education (LPDP) scholarship. B.I. Nasution thanks Mark Elliot and Richard Allmendinger for their supervision and valuable feedback on this work.
\subsubsection{\discintname} The author declares that there are no competing interests.
\subsubsection{LLM Usage Statement.} During the preparation of this manuscript, the author used large language models (LLMs) primarily for editing and improving the readability of the text. All analyses and claims are entirely the author's original work and responsibility.

\end{credits}

%
%
%
%

\bibliography{main}
\bibliographystyle{splncs04}

\pagenumbering{gobble}
\appendix

\newpage

\section*{Supplementary Materials for ``Understanding Latent Flow Models for Tabular Data Synthesis: Targets, Paths, and Sampling''}

\section{Training Algorithm of Latent Flow Model}
\label{sec:algorithm}

    \begin{algorithm}[ht]
        \caption{An iteration of latent flow model training}
        \label{alg:TabSynFlow-t}
        \textbf{Input:} Latent representation from VAE $\mathcal{Z}_1$, interpolation function $\alpha_t$ and $\sigma_t$, latent generative network $\mathrm{NN}(.)$
        \begin{algorithmic}[1]
            \State $z_1\sim p_{\mathcal{Z}}$
            \State $z_0 \sim \mathcal{N}(0, I)$
            \State $t \sim \mathrm{Uniform}(0,1)$
            \State $z_t = \alpha_t z_1 + \sigma_t z_0$
            \State Calculate target \Comment{FM, SM, NM or VFM}
            \State Predict $\text{F}_t^{\theta}(z_t) = \mathrm{NN}(z_t,t)$
            \State Calculate $\mathcal{L}_{\Box}(\theta)$
            \State Update $\theta$
        \end{algorithmic}
    \end{algorithm}

\section{Proofs and Theoretical Motivations}

\subsection{Velocity-Score Equivalence (Proposition~\ref{prop:vel_score})}
\label{ssec:proof-velscore}

We restate Proposition~\ref{prop:vel_score}.
For a Gaussian probability path with interpolants $\alpha_t$ and $\sigma_t$, the conditional velocity field $u_t(z_t|z_1)$ and the conditional score function $\nabla_{z_t} \log p_t(z_t|z_1)$ are related by:
\[u_t(z_t|z_1) = \left(\sigma_t^2 \frac{\dot{\alpha}_t}{\alpha_t} - \dot{\sigma}_t \sigma_t\right) \nabla_{z_t} \log p_t(z_t|z_1) + \frac{\dot{\alpha}_t}{\alpha_t} z_t.\]
\textit{Proof}. Recall the velocity equation in Equation~\eqref{eq:cond_velocity_z}.
\[u_t(z_t\mid z_1)=\dot{\alpha}_t z_1+\dot{\sigma}_t z_0\]
By solving $z_0$ from $z_t=\alpha_t z_1+\sigma_t z_0$, we get the velocity in terms of $z_1$ and $z_t$.
\begin{equation}
    \begin{split}
        u_t(z_t\mid z_1) &= \dot{\alpha}_t z_1+\dot{\sigma}_t z_0 \\
        &= \dot{\alpha}_t z_1 + \dot{\sigma}_t \Big( \frac{z_t-\alpha_t z_1}{\sigma_t} \Big) \\
    \end{split}
    \label{eq:vel_ext}
\end{equation}
Subsequently, recall the conditional score and solve for $z_1$.
\begin{equation}
    \begin{split}
          \nabla_{z_t} \log p_t(z_t|z_1) &= -\frac{z_t-\alpha_t z_1}{\sigma_t^2}  \\
          \alpha_t z_1 &= z_t + \sigma_t^2 \nabla_{z_t} \log p_t(z_t|z_1) \\
          z_1 &= \frac{z_t + \sigma_t^2 \nabla_{z_t} \log p_t(z_t|z_1)}{\alpha_t}
    \end{split}
\end{equation}
Plugging $z_1$ and $\alpha_t z_1$ into Equation~\eqref{eq:vel_ext} we get
\begin{equation}
    \begin{split}
        u_t(z_t\mid z_1) &= \dot{\alpha}_t \frac{z_t + \sigma_t^2 \nabla_{z_t} \log p_t(z_t|z_1)}{\alpha_t} + \dot{\sigma}_t \Big( \frac{z_t - z_t - \sigma_t^2 \nabla_{z_t} \log p_t(z_t|z_1)}{\sigma_t} \Big) \\
        &=\frac{\dot{\alpha}_t }{\alpha_t} z_t + \frac{\dot{\alpha}_t }{\alpha_t}\sigma_t^2 \nabla_{z_t} \log p_t(z_t|z_1) - \dot{\sigma}_t\sigma_t \nabla_{z_t} \log p_t(z_t|z_1) \\
        &= \Big( \sigma_t^2\frac{\dot{\alpha}_t }{\alpha_t} - \dot{\sigma}_t\sigma_t \Big) \nabla_{z_t} \log p_t(z_t|z_1) + \frac{\dot{\alpha}_t }{\alpha_t} z_t
    \end{split}
\end{equation}
which is exactly Equation~\ref{eq:vel_score_relation}. This completes the proof.


\subsection{Why is It Hard Generating Sample using OT Path in NM and SM?}
\label{ssec:why-ot-path}

Score matching (SM) and noise matching (NM) do not directly parameterise the velocity field used by the sampling ODE. Instead, at generation time we must convert the learnt score (or noise, via a score) into velocity using Equation~\eqref{eq:velocity_from_score}. Recall that the velocity can be written as
\[v_t^\theta(\tilde z_t)=
\Big(\sigma_t^2 \frac{\dot{\alpha}_t}{\alpha_t}-\dot{\sigma}_t \sigma_t\Big)s_t^\theta(\tilde z_t)
+\frac{\dot{\alpha}_t}{\alpha_t}\tilde z_t.\]
For the OT path, we have $\alpha_t=t$ and $\sigma_t=1-(1-\sigma_{\min})t$, hence $\dot{\alpha}_t=1$ and $\dot{\sigma}_t=1-\sigma_{\min}$.
Substituting into $v_t^\theta(\tilde z_t)$ yields
\begin{equation}
\begin{split}
v_t^\theta(\tilde z_t)
&=
\Big((1-(1-\sigma_{\min})t)^2 \frac{1}{t} - (1-\sigma_{\min})(1-(1-\sigma_{\min})t)\Big)s_t^\theta(\tilde z_t)
+\frac{1}{t}\tilde z_t \\
&=
\Big(\frac{(1-(1-\sigma_{\min})t)^2}{t} + (1-\sigma_{\min})(1-(1-\sigma_{\min})t)\Big)s_t^\theta(\tilde z_t)
+\frac{1}{t}\tilde z_t
\end{split}
\label{eq:velocity_from_score_ot}
\end{equation}
The second term of Equation~\eqref{eq:velocity_from_score_ot} contains the factor $1/t$, which makes the term $\frac{1}{t}\tilde z_t$ highly unstable when $t\to 0$.
Since $\tilde z_t$ is $O(1)$ in magnitude near the start of sampling (it is close to the base Gaussian initialisation),
the product $\frac{1}{t}\tilde z_t$ can become arbitrarily large for small $t$.
This creates a stiff ODE near the initial time: even moderate step sizes may yield unstable updates or require excessively
small steps to remain stable.

More broadly, SM/NM sampling often involves divisions by path coefficients (directly or indirectly):
for OT, $\alpha_t=t$ vanishes at $t\to 0$, while $\sigma_t=1-t$ vanishes at $t\to 1$.
As a consequence, conversions between velocity, score, and noise can exhibit endpoint instabilities under OT,
making OT substantially harder to use robustly for SM and NM than for velocity-parameterised FM.

\textbf{Practical workaround.}
A common mitigation is to avoid integrating exactly from $t=0$ by starting from a small $t_{\mathrm{start}}>0$
(i.e., $t\in[t_{\mathrm{start}},1]$) to prevent the $1/t$ blow-up.
In our simulations, we set $t_{\mathrm{start}}=0.01$ rather than starting at $t=0$.



\newpage

\section{Experiment details}
\label{sec:exp_details}

Table~\ref{tab:hyperparams_train} displays hyperparameters used in out latent flow models, while Table~\ref{tab:baseline_hyperparams} displays hyperparameters used in the external baseline algorithms. Note that to make the comparison fair, for deep generative models we set similar epochs, batch size and also adjust the number of parameters to be close with our generator networks, as seen in Table~\ref{tab:baseline_parameter_count}. We mainly run the experiment on HPC with GPU NVIDIA L40S. However, considering the number of parameters, this experiment can be re-run in smaller GPU such as RTX family.

\begin{table}[ht]
\centering
\caption{Training hyperparameters for Latent flow models.}
\begin{tabular}{ll p{3cm} p{4cm}}
\toprule
Model & Hyperparameter & {Value} & {Notes} \\
\midrule
\multirow{5}{*}{VAE} & Batch size & 4096 & Standard for tabular data \\
 & Optimiser & Adam & $\beta_1 = 0.9$, $\beta_2 = 0.999$ \\
 & Learning rate & $10^{-3}$ & Reduce LR on plateau \\
 & Max epochs & 4000 & - \\ 
 & $\beta$ schedule & Adaptive & Gradual KL weight increase \\ \midrule
\multirow{4}{*}{Transformer} & Token dimension & 4 & - \\
 & Attention head & 1 & - \\
 & Number of Layers & 2 & - \\
 & Factor & 32 & - \\ \midrule
\multirow{5}{*}{FM (training)} & Batch size & 4096 & Standard for tabular data \\
 & Optimiser & Adam & $\beta_1 = 0.9$, $\beta_2 = 0.999$ \\
 & Learning rate & $10^{-3}$ & Reduce LR on plateau \\
 & Max epochs & 10000 & Early stopping on training loss \\ 
 & Interpolant & OT, VP & - \\
 \midrule
\multirow{2}{*}{FM (network)}  & MLP layer & [1024, 2048, 2048, 1024] & SiLU activation \\
 & Time embedding & [512, 512] & positional embedding, SiLU activation \\ \midrule
\multirow{4}{*}{FM (sampling)} & Solver & Euler (default), midpoint & For SDE also uses Euler-Maruyama and Midpoint \\
 & \#Integration steps & 100 (default), $2^n, n=[2,10]$ & - \\
 & $t_{\mathrm{int}}$ & $[0.6,1.0]$ & Default: $1.0$; VP (except FM): $1-10^{-5}$; NM/SM-OT: $t_{\mathrm{start}}=10^{-2}$ \\
 & $g_t$ & OT, VP & Opposite with $\sigma_t$ used when training \\
 
\bottomrule
\end{tabular}
\label{tab:hyperparams_train}
\end{table}

\begin{table}[ht]
\centering
\caption{Training settings for baseline deep generative models. All baselines use their default implementation settings unless stated otherwise. We adjust only common training-budget parameters to align the comparison with the present study. The Bayesian Network baseline uses the default configuration.}
\begin{tabular}{ll p{3cm} p{4cm}}
\toprule
Model & Hyperparameter & Value & Notes \\
\midrule
\multirow{6}{*}{Common} 
 & Batch size & 4096 & Matched to the latent flow experiments \\
 & Learning rate & $10^{-3}$ & Used where the implementation exposes this option \\
 & Max epochs & 10000 & Matched training budget where supported \\ 
\midrule
\multirow{2}{*}{(R)TVAE} 
 & Network size & [768, 768, 768] & MLP with comparable encoder and decoder capacity \\
\midrule
\multirow{2}{*}{CTGAN} 
 & $G$ size & [640, 640, 640] & ResNet generator \\
 & $D$ size & [640, 640, 640] & MLP discriminator \\
\midrule
\multirow{2}{*}{Normalising Flow} 
 & Network size & [1280, 1280] & MLP transformation network \\
\bottomrule
\end{tabular}
\label{tab:baseline_hyperparams}
\end{table}

\begin{table}[ht]
    \centering
    \caption{Approximate number of trainable network parameters for deep generative models on the Canada dataset. Parameter counts depend on the dataset dimensionality after preprocessing. For TabSyn and the latent flow models, we report the VAE component and the generative model component separately.}
    \begin{tabular}{lc}
    \toprule
        Model & \#Parameters \\
    \midrule
        (R)TVAE & 12,476,391 \\
        CTGAN & 11,502,116 \\
        Normalising Flow & 17,075,386 \\
        TabDDPM & 11,900,078 \\
        TabSyn & 9,118 + 10,698,852 \\
        Latent Flow Models & 9,118 + 10,698,852 \\
    \bottomrule
    \end{tabular}
    \label{tab:baseline_parameter_count}
\end{table}

\clearpage

\section{Evaluation Details}
\label{sec:evaluation_details}

We evaluate the quality of synthetic data along three complementary dimensions: statistical utility~\ref{sec:utility_details}, disclosure risk~\ref{sec:privacy_details}, and additional diagnostic measures~\ref{sec:eval:diagnostics}. Statistical utility assesses how well the synthetic data preserve the marginal, joint, and inferential properties of the original data. Disclosure risk evaluates the extent to which synthetic records may increase an adversary's ability to infer sensitive information through record linkage. Additional diagnostics provide supplementary evidence on distributional fidelity and coverage. Following~\cite{nasution2025flowmatchingtabulardata}, we use the same designation of key variables and target variable throughout the evaluation, so that utility and risk are assessed under a consistent analytical setup.

\subsection{Statistical Utility Assessment}
\label{sec:utility_details}

\subsubsection{Ratio of Counts (ROC).}

Frequency-based analysis captures how well synthetic data mirror the marginal and joint distributions of the original dataset. For any contingency table formed by grouping records by categorical variables or their combinations, we compute per-cell agreement:

\begin{equation}
\text{ROC}_{\text{cell}} = \frac{\min(n_{\text{orig}}, n_{\text{synth}})}{\max(n_{\text{orig}}, n_{\text{synth}})}
\end{equation}

This formula yields 1.0 when the counts are identical and 0.0 when completely mismatched. We compute two variants:

\begin{itemize}
\item Univariate ROC ($\text{ROC}_\text{univ}$): Applied to each variable independently, measuring how well marginal frequencies are preserved
\item Bivariate ROC ($\text{ROC}_\text{biv}$): Applied to two-way cross-tabulations, measuring pairwise association preservation
\end{itemize}

\subsubsection{Confidence Interval Overlap (CIO).}

Statistical inference validity is assessed by comparing regression models fitted on original and synthetic data. For each regression coefficient, we extract 95\% confidence intervals and measure their degree of overlap:

\begin{equation}
\text{CIO}_j = 0.5 \cdot \left(\frac{\text{overlap}}{u_{\text{orig}} - l_{\text{orig}}} + \frac{\text{overlap}}{u_{\text{synth}} - l_{\text{synth}}}\right)
\end{equation}

where $l$ and $u$ denote the lower and upper interval bounds, and the overlap is equal to \[\min(u_{\text{orig}}, u_{\text{synth}}) - \max(l_{\text{orig}}, l_{\text{synth}}).\]

A CIO of 1.0 indicates identical confidence intervals; While $\leq0$ indicates that there is no intersection, therefore the value is truncated to 0 in practice~\cite{nasution2025flowmatchingtabulardata}. We fit models for categorical targets using logistic regression and continuous targets using linear regression. The dataset-level CIO is averaged across all coefficient estimates and target variables.

\subsubsection{Composite utility measure.}

The utility values combine both frequency preservation and inference preservation:

\begin{equation}
\text{U} = \frac{\text{ROC}_{\text{univ}} + \text{ROC}_{\text{biv}} + \text{CIO}}{3}
\end{equation}

Utility ranges 0-1, where 1 indicates perfect statistical utility and 0 indicates complete failure to preserve analytical properties.

\subsection{Disclosure Risk}
\label{sec:privacy_details}

We measure disclosure risk through a record linkage scenario in which an adversary knows a set of quasi-identifiers (key variables, $K$) and attempts to infer a sensitive attribute $T$ by using the synthetic data as a reference~\cite{Little2022comparing}. The metric evaluates the additional attribution risk introduced by synthetic data relative to the baseline uncertainty already present in the original data.

\subsubsection{Targeted Correct Attribution Probability (TCAP).} 
For each synthetic record $j$ with key variables $K'_j$ and sensitive value $T'_j$, the TCAP measures the probability that $T'_j$ is the correct sensitive value for the equivalence class defined by $K'_j$ in the original data~\cite{Little2024synthetic}. It is defined as
\begin{equation}
\label{eq:tcap_paper2}
\mathrm{TCAP}'_j
=
\mathbb{P}_{\mathcal{D}}(T'_j \mid K'_j)
=
\frac{
\sum_{i=1}^{n}
\mathbb{I}[T_i = T'_j,\ K_i = K'_j]
}{
\sum_{i=1}^{n}
\mathbb{I}[K_i = K'_j]
},
\end{equation}
where $\mathcal{D}$ denotes the original dataset and $n$ its size. Intuitively, $\mathrm{TCAP}'_j$ is the empirical probability that the sensitive value appearing in the synthetic record matches the true sensitive value in the original-data equivalence class with the same key variables.

\subsubsection{Within-Equivalence Class Attribute Probability (WEAP).} WEAP acts as a baseline, which measures the probability of an adversary who does not use the synthetic data and instead can only guess within the corresponding equivalence class in the original data~\cite{nasution2025flowmatchingtabulardata}.
\begin{equation}
\label{eq:weap_paper2}
\mathrm{WEAP}_j
=
\mathbb{P}_{\mathcal{D}}(T_j \mid K_j)
=
\frac{
\sum_{i=1}^{n}
\mathbb{I}[T_i = T_j,\ K_i = K_j]
}{
\sum_{i=1}^{n}
\mathbb{I}[K_i = K_j]
}.
\end{equation}

Although this has close empirical form as TCAP, the interpretation is different. $\mathrm{WEAP}_j$ represents the baseline probability of success from a random draw within the original-data equivalence class, that is, the level of attribution risk that already exists without access to synthetic data. 

\subsubsection{Risk Measurement.} By using those two components, we can measure the disclosure risk that is defined as the normalised increase in adversarial success above this baseline:
\begin{equation}
\label{eq:risk_paper2}
\text{R}_j
=
\max\left\{
0,\
\frac{\mathrm{TCAP}'_j - \mathrm{WEAP}_j}{1 - \mathrm{WEAP}_j}
\right\}.
\end{equation}

A value near $0$ indicates that the synthetic data provides little or no additional disclosure risk beyond the baseline in the original data, whereas a value near $1$ indicates a substantial increase in attribution risk. Negative values, corresponding to cases where the adversary performs worse than the baseline, are truncated to zero in practice~\cite{nasution2025flowmatchingtabulardata,elliot2023samples}. At the dataset level, disclosure risk is computed as the mean of $R_j$ over all synthetic records retained by the threshold criterion $\tau$. By construction, disclosure risk measures empirical linkage probability rather than worst-case theoretical scenarios, aligning with statistical agencies' practical risk quantification needs.

\subsection{Additional Evaluation Metrics}
\label{sec:eval:diagnostics}
In addition to utility and disclosure risk, we report auxiliary diagnostics to characterise distributional fidelity and detectability. Unless stated otherwise, lower values indicate closer agreement between real and synthetic data. All diagnostics are computed on the same preprocessed representation used by the data loader (continuous normalisation and categorical encoding). We used SDV~\cite{patki2016SDV} library for shape and trend, and Synthcity~\cite{qian2023synthcity} for $\alpha$-precision, $\beta$-recall, and Wasserstein distance.

\subsubsection{Column-wise fidelity (Shape).}
We summarise marginal discrepancies using a Shape score that averages (i) the Kolmogorov--Smirnov statistic over continuous columns and (ii) total variation distance over categorical columns.

\subsubsection{Pairwise structure (Trend).}
We summarise pairwise dependence discrepancies using a Trend score, combining absolute Pearson correlation differences for continuous--continuous pairs and a contingency-table discrepancy for categorical--categorical pairs, averaged across pairs.

\subsubsection{$\alpha$-precision and $\beta$-recall.}
We report $\alpha$-precision and $\beta$-recall as complementary measures of sample-wise fidelity and coverage. Intuitively, $\alpha$-precision measures the fraction of synthetic samples that fall within a high-density region of the real distribution, while $\beta$-recall measures the fraction of real samples covered by a high-density region of the synthetic distribution. We compute these metrics following the referenced procedure based on estimating high-density sets from samples.

\subsubsection{Distributional discrepancy (WD).}
We compute Wasserstein distance between real and synthetic samples in the encoded feature space. Considering memory constraint on large datasets, we estimate WD using $K$ batches of size $n=5{,}000$ sampled from both datasets and report the mean across batches.

\subsubsection{Pareto and rank-based summaries}

We include two additional summaries to complement the aggregate results in the main text. First, we report a Pareto front to visualise the trade-off between synthetic data utility and disclosure risk. A configuration is considered Pareto-efficient if no other configuration achieves both higher utility and lower disclosure risk. This summary is intended as a visual aid for identifying non-dominated operating points, rather than as a single model-selection rule.

Second, we report rank-based summaries across datasets. For each dataset, configurations are ranked separately for each metric. Utility is ranked in descending order, while disclosure risk is ranked in ascending order. We then report the mean and standard deviation of ranks across datasets. This provides a scale-invariant descriptive summary and reduces the sensitivity of comparisons to dataset-specific metric ranges. We do not combine the ranks or Pareto summaries into a single overall score, since the preferred trade-off between utility and disclosure risk depends on the intended application and release context.

\clearpage
\newpage
\section{Detailed Experimental Results}
\label{app:detail-res}

\subsection{Pareto Front and Rank Analysis}
\label{app:rank-pareto}

To extend the results on Figure~\ref{fig:paired_ot_vp_all}, we provided pareto front and rank analysis on the algorithms that we benchmarked. We focused the analysis on the primary metrics, namely the utility and risk.

\subsubsection{Correlation checks on evaluation results.}

To avoid comparing metrics on different numerical scales, we compute correlations at the rank level rather than using raw metric values. For each metric, configurations are first ranked according to the relevant performance direction: utility, alpha precision, and beta recall are ranked such that higher values are more favourable, while disclosure risk, Shape error, Trend error, and Wasserstein distance are ranked such that lower values are more favourable. We then compute Spearman correlations between these rank orderings.

The results show that utility rankings are strongly aligned with most fidelity-related rankings, including Shape, Trend, beta recall, and Wasserstein distance. Alpha precision shows a weaker alignment, suggesting that it captures a somewhat different diagnostic aspect or is less discriminative in this setting. Disclosure-risk rankings are negatively associated with utility and most fidelity-related rankings, indicating a clear utility--risk trade-off: configurations ranked highly for utility tend to be ranked less favourably under disclosure risk. This supports focusing the rank-based summary on the two primary model-selection axes, utility and disclosure risk, while retaining the remaining diagnostic metrics as detailed supplementary results.

\begin{table}[]
    \centering
    \caption{Lower triangle matrix of average Rank-level Spearman correlations between evaluation criteria across datasets. Metric directions are harmonised before ranking so that each rank reflects the relevant performance direction. Values are reported as mean $\pm$ standard deviation across datasets. Positive values indicate aligned rank orderings, while negative values indicate opposing rank orderings.}
\begin{tabular}{c|c|c|c|c|c|c}
\toprule
 & Utility & Risk & Shape & Trend & $\alpha$-pr. & $\beta$-re. \\
\midrule

Utility & - & - & - & - & - & -  \\
Risk & -0.74 ± 0.27 & - & - & - & - & - \\
Shape & 0.84 ± 0.08 & -0.64 ± 0.22 & - & - & - & -  \\
Trend & 0.83 ± 0.14 & -0.72 ± 0.27 & 0.88 ± 0.10 & - & - & - \\
$\alpha$-pr. & 0.41 ± 0.19 & -0.33 ± 0.20 & 0.52 ± 0.20 & 0.42 ± 0.26 & - & -  \\
$\beta$-re. & 0.77 ± 0.26 & -0.79 ± 0.28 & 0.67 ± 0.24 & 0.67 ± 0.37 & 0.49 ± 0.20 & - \\
WD & 0.82 ± 0.16 & -0.80 ± 0.24 & 0.71 ± 0.12 & 0.76 ± 0.20 & 0.46 ± 0.29 & 0.84 ± 0.17  \\
\bottomrule
\end{tabular}

    \label{tab:flow-model-rank-corr}
\end{table}

\subsubsection{Pareto and rank-based utility--risk summaries}

We provide two complementary summaries of the utility--disclosure risk trade-off. The Pareto-style plots visualise non-dominated configurations within each dataset, where a configuration is included in Pareto set if no other configuration achieves both higher utility and lower disclosure risk. These plots are intended as visual aids for interpreting operating regimes rather than as a single model-selection rule.

\begin{figure}
    \centering
    \includegraphics[width=\linewidth]{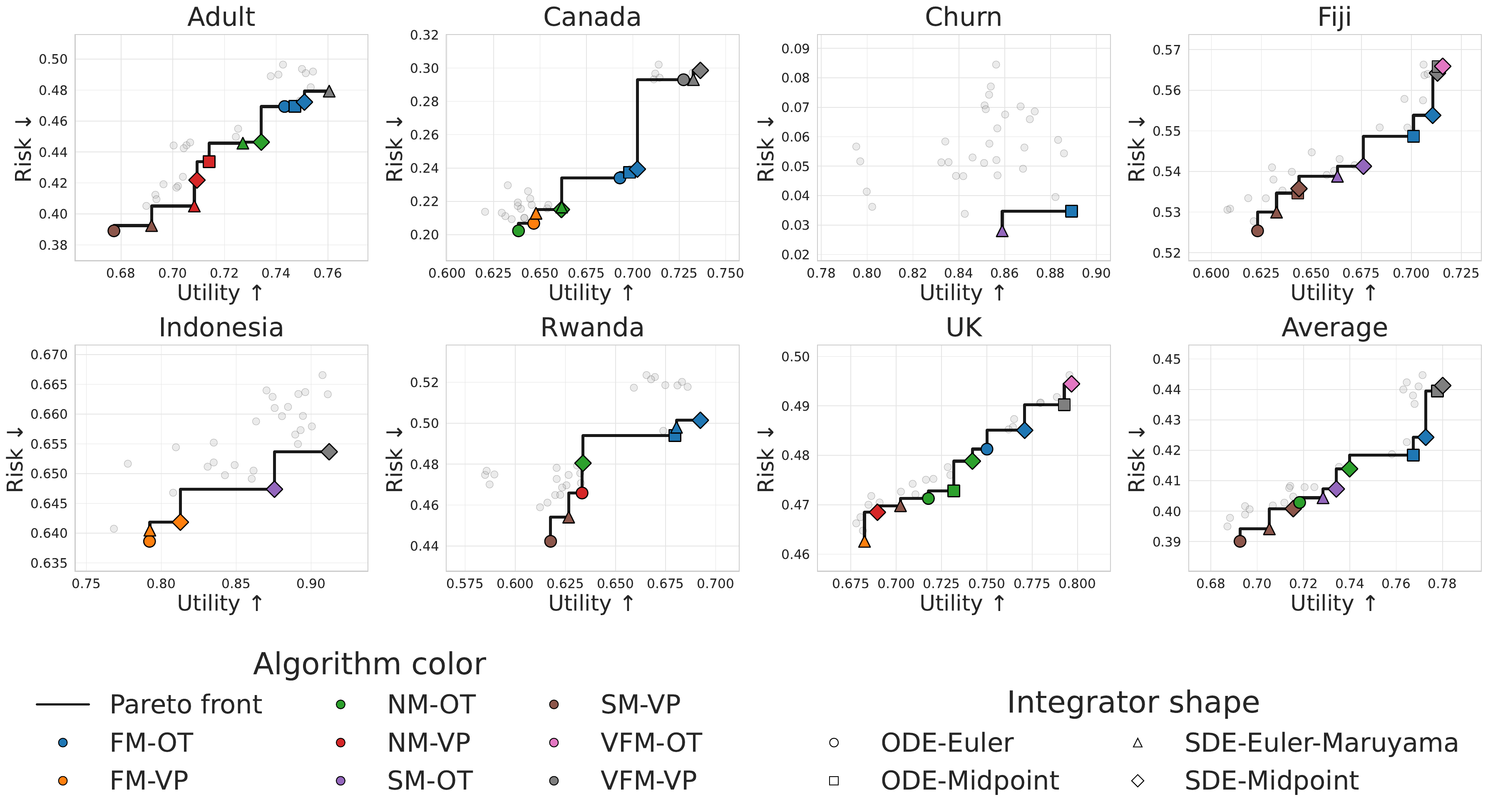}
    \caption{Pareto-style utility--disclosure risk summaries across datasets. Each panel shows the utility and disclosure risk of all target--path and integrator configurations for one dataset, with the final panel reporting the average across datasets. Utility is shown on the horizontal axis, where higher values are preferred, while disclosure risk is shown on the vertical axis, where lower values are preferred. Coloured markers denote target--path configurations and marker shapes denote integrators. Black lines highlight Pareto set configurations, for which no other configuration achieves both higher utility and lower disclosure risk.}
    \label{fig:placeholder}
\end{figure}

We also report rank-based summaries to reduce sensitivity to dataset-specific metric scales in Table~\ref{tab:app_q123_rank}. For each dataset, configurations are ranked separately for utility and disclosure risk, with higher utility ranked more favourably and lower disclosure risk ranked more favourably. The reported values are the mean and standard deviation of ranks across datasets; lower ranks therefore indicate more favourable positions.

The rank summaries reinforce the utility--risk trade-off observed in the main text. VFM-VP and FM-OT occupy the strongest utility ranks, whereas FM-VP and SM-VP tend to occupy the most conservative disclosure-risk ranks. This indicates that the lowest-risk configurations are not necessarily the same as the highest-utility configurations. We therefore interpret these results as supporting risk-constrained configuration selection rather than a single overall ranking.

\begin{table}[htbp]
\centering
\caption{Rank summary of utility and disclosure risk across datasets. For each dataset, configurations are ranked separately, with higher utility and lower disclosure risk receiving more favourable ranks. Values report the mean $\pm$ standard deviation of ranks across datasets. Lower ranks are better.}
\label{tab:app_q123_rank}
\begin{tabular}{w{c}{1.5cm} w{c}{2cm} w{c}{2cm} w{c}{2cm} w{c}{2cm} w{c}{2cm}} 
\toprule
Evaluation & Algorithm & ODE-Euler & ODE-Midpoint & SDE-EM & SDE-Midpoint\\
\midrule
\multirow{8}{*}{Utility}  & FM-OT & 9.71 ± 2.98 & 7.29 ± 3.35 & 7.71 ± 3.15 & 4.71 ± 2.75 \\
 & FM-VP & 28.29 ± 4.57 & 26.14 ± 2.73 & 28.00 ± 5.10 & 25.86 ± 4.41 \\
 & NM-OT & 21.57 ± 3.41 & 19.43 ± 4.28 & 14.57 ± 3.15 & 13.43 ± 1.90 \\
 & NM-VP & 26.43 ± 7.32 & 20.14 ± 4.85 & 22.43 ± 8.10 & 20.86 ± 6.01 \\
 & SM-OT & 23.29 ± 4.61 & 21.43 ± 4.58 & 17.00 ± 3.65 & 15.29 ± 3.25 \\
 & SM-VP & 28.29 ± 3.77 & 24.00 ± 4.32 & 25.29 ± 4.64 & 22.86 ± 3.02 \\
 & VFM-OT & 10.00 ± 3.83 & 7.43 ± 4.50 & 9.00 ± 5.42 & 7.00 ± 6.58 \\
 & VFM-VP & 7.00 ± 3.56 & 3.57 ± 1.72 & 7.29 ± 5.15 & 2.71 ± 1.50 \\
\midrule
 \multirow{8}{*}{Risk} & FM-OT & 19.43 ± 6.43 & 19.57 ± 7.41 & 22.43 ± 1.40 & 23.43 ± 3.78 \\
 & FM-VP & 4.57 ± 3.15 & 8.00 ± 4.32 & 7.14 ± 6.87 & 10.71 ± 4.96 \\
 & NM-OT & 10.14 ± 7.34 & 14.29 ± 6.50 & 19.29 ± 6.05 & 19.57 ± 6.13 \\
 & NM-VP & 9.29 ± 9.76 & 14.29 ± 5.28 & 10.86 ± 7.80 & 15.57 ± 7.21 \\
 & SM-OT & 10.71 ± 3.73 & 14.43 ± 4.93 & 11.71 ± 5.62 & 13.29 ± 5.85 \\
 & SM-VP & 5.14 ± 4.85 & 11.71 ± 6.52 & 6.14 ± 4.91 & 9.71 ± 3.99 \\
 & VFM-OT & 26.86 ± 3.58 & 28.14 ± 6.15 & 28.57 ± 1.81 & 30.14 ± 1.95 \\
 & VFM-VP & 23.29 ± 5.91 & 25.86 ± 4.67 & 26.57 ± 2.30 & 27.14 ± 5.58 \\
\bottomrule
\end{tabular}
\end{table}

\subsection{Q1--Q3 Additional Results}
\label{app:detail-res-q1q3}
Table~\ref{tab:app_q123_summary} reports three additional fidelity diagnostics: \(\alpha\)-precision, \(\beta\)-recall, and Wasserstein distance (WD). Overall, \(\alpha\)-precision is high for most configurations, making it less discriminative in this setting. In contrast, \(\beta\)-recall and WD show clearer separation between target--path configurations. VFM variants generally achieve stronger coverage and lower distributional discrepancy, while FM-OT remains competitive, especially under the default ODE-Euler setting. NM and SM variants tend to be slightly weaker on these auxiliary fidelity metrics, although SDE sampling can improve their coverage and WD. 

Solver and sampling effects are secondary but still visible. SDE-Midpoint frequently gives the best or second-best \(\beta\)-recall and WD values, particularly for NM and SM variants. Midpoint-based solvers also tend to reduce WD modestly compared with Euler at the same step count. These results support the main Q1--Q3 findings: the target--path choice largely determines the fidelity regime, while sampling dynamics and solver choice act as calibration tools rather than universally changing the ranking of configurations.

\begin{table}[htbp]
\centering
\caption{Additional aggregate comparison answering Q1--Q3 based on fidelity: learning target (rows), sampling dynamics (ODE vs.\ SDE), and numerical solver (Euler vs.\ Midpoint). We also include TabSyn results from~\cite{nasution2025flowmatchingtabulardata} as the baseline. Bold entries mark the best integrator and underline the second best for each dataset and metric. Values are averaged over 20 runs per dataset and then averaged across datasets.}
\label{tab:app_q123_summary}
\begin{tabular}{w{c}{1.5cm} w{c}{2cm} w{c}{2cm} w{c}{2cm} w{c}{2cm} w{c}{2cm}} 
\toprule
Evaluation & Algorithm & ODE-Euler & ODE-Midpoint & SDE-EM & SDE-Midpoint\\
\midrule
\multirow{8}{*}{$\alpha$-Pr. (\%,$\uparrow$)}  & FM-OT & 99.0585 & 98.9804 & \textbf{99.1770} & \underline{99.1056} \\
 & FM-VP & \underline{98.8180} & 98.7155 & \textbf{98.8280} & 98.6853 \\
 & NM-OT & 98.1938 & 98.3096 & \underline{98.7344} & \textbf{98.9175} \\
 & NM-VP & 96.8774 & \textbf{99.0568} & 98.0071 & \underline{99.0236} \\
 & SM-OT & 97.8428 & 98.1218 & \underline{98.4934} & \textbf{98.7896} \\
 & SM-VP & 97.4505 & \textbf{98.9718} & 98.3052 & \underline{98.8400} \\
 & VFM-OT & 98.9487 & \textbf{99.0400} & \underline{98.9846} & 98.9125 \\
 & VFM-VP & \textbf{99.3036} & 99.0316 & \underline{99.2842} & 99.0101 \\
\midrule
 \multirow{8}{*}{$\beta$-Re. (\%,$\uparrow$)} & FM-OT & \underline{62.2081} & 61.9859 & 62.2071 & \textbf{62.2390} \\
 & FM-VP & 60.1029 & \underline{60.4509} & 60.0728 & \textbf{60.5295} \\
 & NM-OT & 60.5542 & 60.6868 & \underline{60.9172} & \textbf{61.0350} \\
 & NM-VP & 59.6065 & \underline{60.5713} & 59.8345 & \textbf{60.6301} \\
 & SM-OT & 60.4276 & 60.4681 & \underline{60.8143} & \textbf{60.9561} \\
 & SM-VP & 59.8193 & \underline{60.6592} & 60.0315 & \textbf{60.6650} \\
 & VFM-OT & 63.8334 & \underline{63.9348} & 63.8429 & \textbf{63.9994} \\
 & VFM-VP & 63.5339 & \textbf{64.0051} & 63.5101 & \underline{63.9973} \\
\midrule
\multirow{8}{*}{WD ($\downarrow$)} & FM-OT & 1.2685 & 1.2661 & \underline{1.2653} & \textbf{1.2567} \\
 & FM-VP & 1.3350 & \textbf{1.3113} & 1.3344 & \underline{1.3116} \\
 & NM-OT & 1.3428 & 1.3323 & \underline{1.3221} & \textbf{1.3106} \\
 & NM-VP & 1.3631 & \underline{1.3119} & 1.3483 & \textbf{1.3082} \\
 & SM-OT & 1.3434 & 1.3332 & \underline{1.3255} & \textbf{1.3117} \\
 & SM-VP & 1.3618 & \underline{1.3080} & 1.3463 & \textbf{1.3060} \\
 & VFM-OT & 1.2216 & \underline{1.2133} & 1.2195 & \textbf{1.2085} \\
 & VFM-VP & 1.2345 & \underline{1.2122} & 1.2342 & \textbf{1.2118} \\
\bottomrule
\end{tabular}
\end{table}

\subsection{Integration Endpoint ($t_{\mathrm{int}}$)}
\label{sec:res-t-ode-detailed}

While Section~\ref{sec:res-t-ode} reports averages, Figure~\ref{fig:t-ode-full} details the dataset-specific utility-risk trajectories as $t_{\mathrm{int}}$ increases from 0.6 to 1.0. The key observation is the pronounced heterogeneity across domains: a uniform integration strategy is rarely optimal, as the marginal utility gains and risk penalties vary drastically by dataset.

We identify three distinct behavioral patterns. First, some data such as \textbf{Churn} and \textbf{Canada} exhibit smoother trajectories where risk remains comparatively low ($R < 0.3$) even at full integration. In Churn specifically, risk tends to be negligible across the entire trajectory, making $t_{\mathrm{int}}$ purely a utility-cost decision. Here, simpler baselines (e.g., NM) remain competitive with VFM variants.

Second, domains like \textbf{UK} and \textbf{Adult} display a classic "knee" shape. Utility improves steadily, but risk acceleration becomes noticeable in the late-integration regime ($t_{\mathrm{int}} > 0.9$), suggesting that a slightly truncated endpoint (e.g., $t_{\mathrm{int}} \approx 0.95$) offers a pragmatic balance, sacrificing minimal utility for meaningful risk reduction.

Third, high-risk domains such as \textbf{Indonesia}, \textbf{Fiji}, and \textbf{Rwanda} show a steep increase in risk. Their trajectories curve sharply upward in the final stages, indicating that the last increments in utility come at a disproportionate disclosure risk. In these cases, blindly defaulting to full integration is detrimental. Early stopping or risk-constrained integration endpoints are recommended to maintain an acceptable risk.

Across all datasets, the structural difference between OT and VP trajectories is evident. OT-based methods (solid lines) consistently reach higher utility tiers earlier in the integration process compared to VP (dashed lines). This characteristic makes OT variants recommended for truncation, offering a safer fallback when computational or risk constraints require early stopping. In contrast, VP variants generally require nearly complete integration to be competitive.

\begin{figure}[ht]
    \centering
    \includegraphics[width=\linewidth]{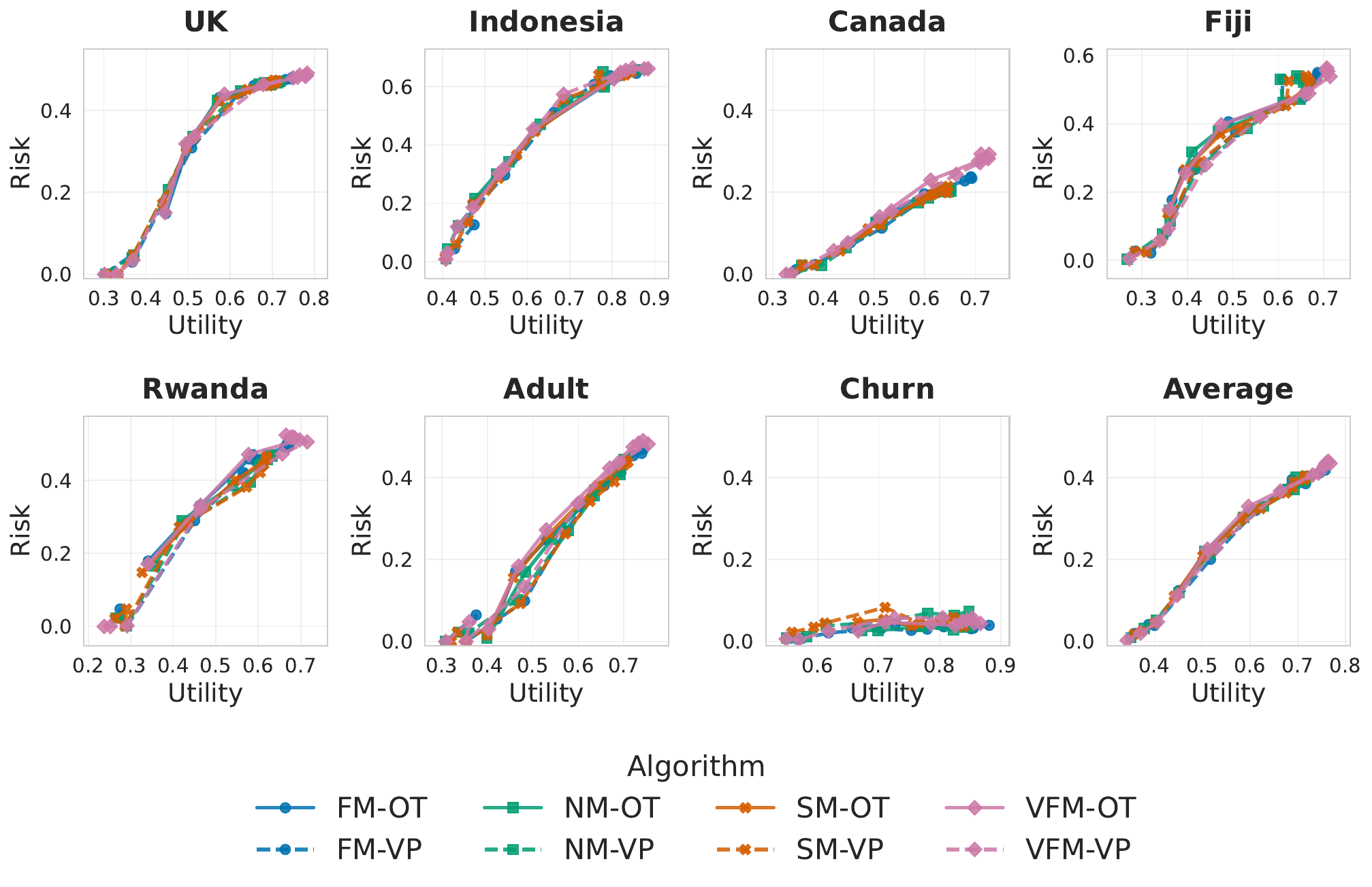}
    \caption{Dataset-specific risk-utility trajectories for varying integration times $t_{\mathrm{int}} \in [0.6, 1.0]$. Benign datasets (Churn, Canada) show linear progression with manageable risk, while challenging domains (Fiji, Rwanda, Indonesia) exhibit a sharp "risk wall" at high integration times. OT-based methods (solid lines) typically achieve higher utility earlier in the trajectory than VP-based methods (dashed lines), demonstrating greater resilience to truncation.}
    \label{fig:t-ode-full}
\end{figure}

\subsection{Integration Steps}
\label{sec:nfe-detailed}

Figure~\ref{fig:nfe-all} shows dataset-specific utility--risk trajectories as the sampling budget increases from 4 to 1024 steps. We focus on within-dataset trends, since the absolute utility and risk ranges differ materially across domains. Overall, the figure highlights that the marginal utility gained per additional compute, and the accompanying risk growth, are strongly dataset-dependent.

In \textbf{Churn}, risk remains low across the entire budget range while utility is consistently high, so increasing steps primarily trades compute for incremental utility improvements. \textbf{Canada} also exhibits relatively gradual risk growth as utility increases. In contrast, \textbf{Fiji} and \textbf{Rwanda} show steeper frontiers where risk increases rapidly as one pushes towards higher utility, and additional compute beyond mid-range budgets yields diminishing utility gains while risk continues to accumulate. These patterns reinforce that selecting steps can also be risk-aware rather than solely compute-driven.

\begin{figure}[h]
    \centering
    \includegraphics[width=0.95\linewidth]{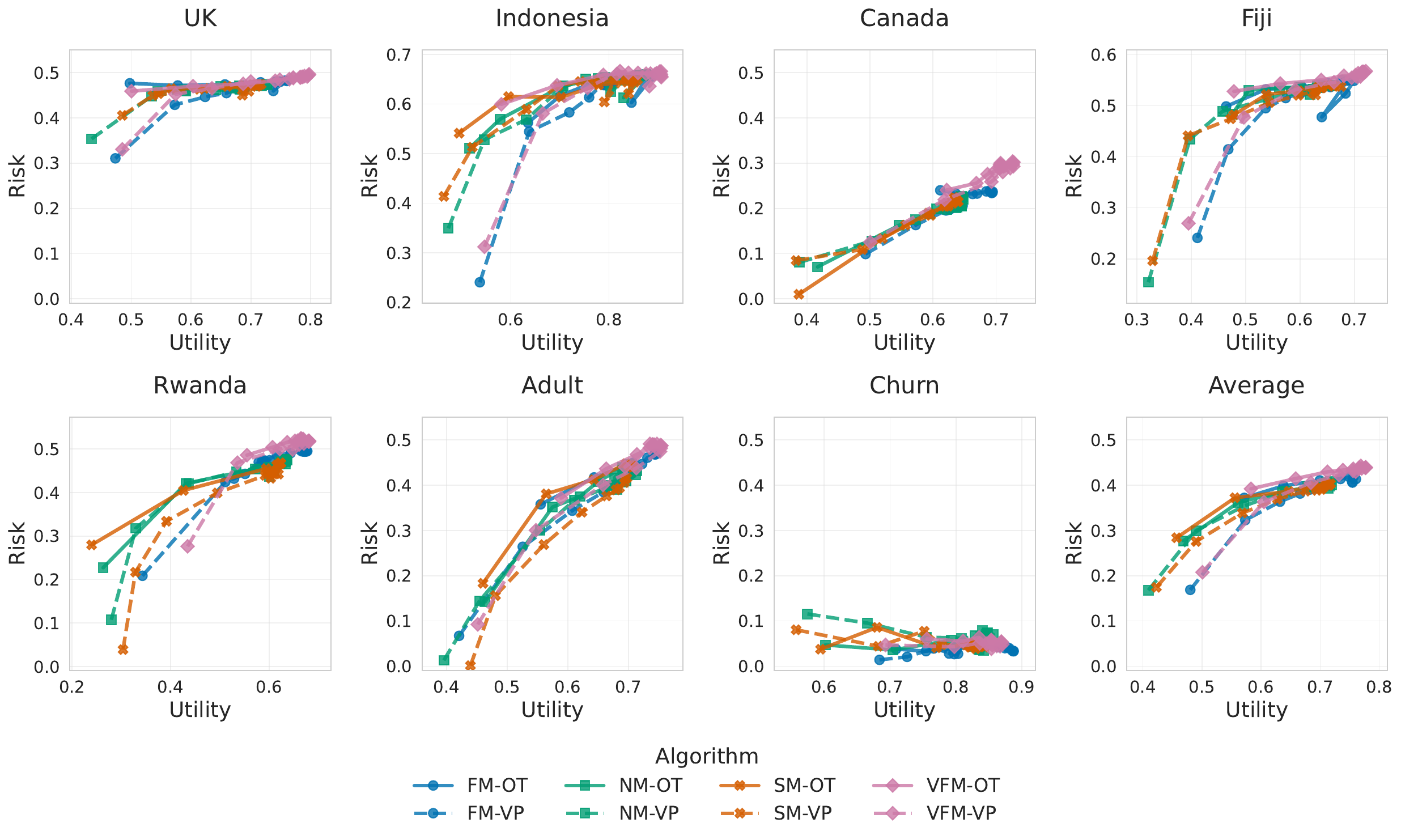}
    \caption{Dataset-specific utility--risk trajectories as a function of sampling budget (steps $=2^n$ with $n\in[2,10]$). Each line connects budgets from small to large integration steps, showing how increased compute moves a method along the utility--risk frontier. Axis limits are set per dataset to preserve readability of within-dataset trends. For a direct comparison, we refer to the aggregated curves reported in Figure~\ref{fig:res_nfe}.}
    \label{fig:nfe-all}
\end{figure}

\clearpage
\newpage

\subsection{Detailed Evaluation Results - External Comparison}

We report detailed performance of comparesion of our algorithm with external baselines in Tables~\ref{tab:external-detail-u}--~\ref{tab:external-detail-wd}. All entries are mean $\pm$ standard deviation across 20 seeds. Some values may coincide at the reported precision due to rounding.

\begin{table}[h]
    \centering
    \caption{Average utility of selected flow models compared to external baselines across datasets. Bold entries mark the best algorithm and underline the second best for each dataset and metric.}
\begin{tabular}{lccccccc}
\toprule
Algorithm & Adult & Canada & Churn & Fiji & Indonesia & Rwanda & UK \\
\midrule
Bayesian Network & 0.58±0.01 & 0.56±0.00 & \underline{0.87±0.01} & 0.50±0.01 & 0.74±0.01 & 0.56±0.01 & 0.59±0.01 \\
TVAE & 0.48±0.01 & 0.5±0.01 & 0.72±0.01 & 0.46±0.01 & 0.56±0.01 & 0.4±0.02 & 0.48±0.01 \\
Robust TVAE & 0.40±0.01 & 0.49±0.01 & 0.67±0.02 & 0.41±0.01 & 0.43±0.00 & 0.36±0.01 & 0.41±0.00 \\
CTGAN & 0.47±0.01 & 0.57±0.01 & 0.45±0.01 & 0.48±0.01 & 0.6±0.01 & 0.45±0.01 & 0.59±0.02 \\
Normalising Flow & 0.48±0.01 & 0.53±0.01 & 0.71±0.01 & 0.44±0.01 & 0.53±0.00 & 0.43±0.01 & 0.49±0.00 \\
TabDDPM~\cite{nasution2025flowmatchingtabulardata} & 0.67±0.02 & 0.28±0.00 & 0.82±0.01 & 0.13±0.00 & 0.8±0.01 & 0.41±0.01 & \textbf{0.78±0.02} \\
TabSyn~\cite{nasution2025flowmatchingtabulardata} & \textbf{0.75±0.01} & \textbf{0.74±0.02} & 0.87±0.01 & \textbf{0.71±0.02} & 0.87±0.03 & \underline{0.67±0.02} & \underline{0.76±0.02} \\
FM-OT & 0.74±0.01 & 0.69±0.02 & \textbf{0.88±0.02} & 0.68±0.02 & \textbf{0.88±0.03} & 0.67±0.02 & 0.75±0.01 \\
VFM-VP & \underline{0.75±0.01} & \underline{0.73±0.01} & 0.87±0.02 & \underline{0.71±0.03} & \underline{0.88±0.03} & \textbf{0.68±0.03} & 0.76±0.02 \\
\bottomrule
\end{tabular}
    \label{tab:external-detail-u}
\end{table}

\begin{table}[h]
    \centering
    \caption{Average risk of selected flow models compared to external baselines across datasets. Bold entries mark the best algorithm and underline the second best for each dataset and metric.}
\begin{tabular}{lccccccc}
\toprule
Algorithm & Adult & Canada & Churn & Fiji & Indonesia & Rwanda & UK \\
\midrule
Bayesian Network & 0.27±0.03 & 0.2±0.02 & 0.35±0.08 & 0.51±0.01 & 0.56±0.01 & 0.42±0.01 & \underline{0.26±0.01} \\
TVAE & 0.23±0.02 & 0.26±0.02 & \underline{0.01±0.04} & 0.48±0.01 & 0.45±0.05 & 0.51±0.01 & 0.44±0.01 \\
Robust TVAE & \textbf{0.00±0.00} & \underline{0.04±0.04} & \textbf{0.00±0.00} & \underline{0.18±0.10} & \textbf{0.00±0.00} & \underline{0.18±0.14} & \textbf{0.00±0.00} \\
CTGAN & \underline{0.00±0.00} & 0.27±0.02 & 0.19±0.02 & 0.26±0.01 & \underline{0.24±0.05} & 0.41±0.04 & 0.37±0.01 \\
Normalising Flow & 0.00±0.00 & 0.19±0.03 & 0.04±0.06 & 0.44±0.01 & 0.34±0.03 & 0.36±0.02 & 0.36±0.01 \\
TabDDPM~\cite{nasution2025flowmatchingtabulardata} & 0.45±0.02 & \textbf{0.03±0.07} & 0.07±0.05 & \textbf{0.05±0.09} & 0.65±0.01 & \textbf{0.07±0.16} & 0.50±0.01 \\
TabSyn~\cite{nasution2025flowmatchingtabulardata} & 0.48±0.01 & 0.27±0.01 & 0.07±0.10 & 0.56±0.01 & 0.66±0.01 & 0.50±0.01 & 0.49±0.01 \\
FM-OT & 0.47±0.02 & 0.23±0.02 & 0.04±0.05 & 0.55±0.01 & 0.66±0.01 & 0.50±0.02 & 0.48±0.01 \\
VFM-VP & 0.48±0.02 & 0.29±0.01 & 0.05±0.06 & 0.56±0.01 & 0.66±0.01 & 0.52±0.01 & 0.49±0.01 \\
\bottomrule
\end{tabular}
    \label{tab:external-detail-r}
\end{table}

\begin{table}[h]
    \centering
    \caption{Average shape of selected flow models compared to external baselines across datasets. Bold entries mark the best algorithm and underline the second best for each dataset and metric.}
\begin{tabular}{lccccccc}
\toprule
Algorithm & Adult & Canada & Churn & Fiji & Indonesia & Rwanda & UK \\
\midrule
Bayesian Network & 17.54±0.03 & 11.95±0.05 & 3.12±0.08 & 12.48±0.05 & 0.54±0.04 & 11.56±0.07 & 13.02±0.04 \\
TVAE & 20.83±0.05 & 11.8±0.04 & 5.07±0.12 & 14.57±0.03 & 2.58±0.04 & 9.24±0.04 & 16.32±0.04 \\
Robust TVAE & 33.01±0.05 & 15.62±0.04 & 10.10±0.13 & 20.43±0.05 & 15.90±0.03 & 14.76±0.04 & 19.14±0.03 \\
CTGAN & 27.91±0.05 & 10.2±0.05 & 22.57±0.13 & 14.20±0.05 & 2.02±0.03 & 12.30±0.06 & 15.09±0.04 \\
Normalising Flow & 22.67±0.07 & 13.76±0.07 & 6.83±0.17 & 14.00±0.04 & 4.77±0.03 & 15.62±0.09 & 18.68±0.06 \\
TabDDPM~\cite{nasution2025flowmatchingtabulardata} & 1.56±0.03 & 40.58±0.12 & 1.16±0.08 & 50.74±0.12 & 0.34±0.03 & 24.28±0.12 & 0.95±0.04 \\
TabSyn~\cite{nasution2025flowmatchingtabulardata}  & 0.99±0.04 & 3.30±0.07  & 1.04±0.13 & 1.04±0.01  & 0.46±0.03 & 0.94±0.05  & 0.93±0.04 \\
FM-OT & \textbf{0.89±0.05} & 3.74±0.04 & \textbf{1.03±0.31} & \underline{1.06±0.04} & \textbf{0.24±0.03} & \underline{0.83±0.05} & \underline{0.85±0.03} \\
VFM-VP & 1.04±0.05 & \underline{3.64±0.05} & 1.23±0.08 & 1.17±0.06 & \underline{0.24±0.03} & \textbf{0.78±0.07} & \textbf{0.74±0.03} \\
\bottomrule
\end{tabular}
    \label{tab:external-detail-sh}
\end{table}

\begin{table}[h]
    \centering
    \caption{Average trend of selected flow models compared to external baselines across datasets. Bold entries mark the best algorithm and underline the second best for each dataset and metric.}
\begin{tabular}{lccccccc}
\toprule
Algorithm & Adult & Canada & Churn & Fiji & Indonesia & Rwanda & UK \\
\midrule
Bayesian Network & 18.83±0.10 & 29.48±0.18 & 3.84±0.45 & 21.44±0.07 & 3.05±0.04 & 19.38±0.12 & 23.54±0.06 \\
TVAE & 25.78±0.78 & 28.81±0.11 & 12.08±0.57 & 34.46±0.04 & 4.36±0.17 & 14.36±0.04 & 29.68±0.05 \\
Robust TVAE & 51.39±4.38 & 48.44±0.06 & 25.10±0.11 & 65.35±0.02 & 27.86±0.03 & 38.07±0.06 & 38.83±0.03 \\
CTGAN & 27.88±0.22 & 18.89±1.51 & 34.23±0.26 & 40.8±0.05 & 4.46±0.05 & 21.29±0.06 & 26.1±0.06 \\
Normalising Flow & 25.81±0.14 & 23.55±0.13 & 8.52±0.16 & 22.91±0.05 & 9.07±0.03 & 25.42±0.12 & 30.61±0.06 \\
TabDDPM~\cite{nasution2025flowmatchingtabulardata} & 3.64±0.06 & 61.30±0.13 & \textbf{2.38±0.15} & 78.47±0.05 & 0.61±0.05 & 45.19±0.21 & \underline{1.79±0.05} \\
TabSyn~\cite{nasution2025flowmatchingtabulardata}  & \textbf{2.11±0.07} & \underline{8.05±0.52} & \underline{2.41±0.25} & \textbf{1.96±0.01} & 0.78±0.04 & 1.37±0.09 & 1.91±0.04 \\
FM-OT & \underline{2.13±0.14} & 8.25±0.71 & 2.46±0.52 & 2.11±0.05 & \textbf{0.46±0.05} & \underline{1.29±0.08} & 1.86±0.05 \\
VFM-VP & 2.25±0.15 & \textbf{6.75±0.13} & 2.88±0.17 & \underline{2.06±0.06} & \underline{0.47±0.03} & \textbf{1.24±0.07} & \textbf{1.75±0.06} \\
\bottomrule
\end{tabular}
    \label{tab:external-detail-tr}
\end{table}

\begin{table}[h]
    \centering
    \caption{Average $\alpha$-precision of selected flow models compared to external baselines across datasets. Bold entries mark the best algorithm and underline the second best for each dataset and metric.}
\begin{tabular}{lccccccc}
\toprule
Algorithm & Adult & Canada & Churn & Fiji & Indonesia & Rwanda & UK \\
\midrule
Bayesian Network & 61.43±0.29 & 65.84±0.31 & 98.84±0.41 & 66.38±0.17 & 96.95±0.10 & 57.34±0.3 & 60.01±0.17 \\
TVAE & 85.03±0.23 & 72.06±0.18 & 98.79±0.28 & 51.39±0.09 & 93.46±0.13 & 76.06±0.19 & 35.73±0.11 \\
Robust TVAE & 26.42±0.08 & 40.92±0.08 & 95.05±0.54 & 35.32±0.06 & 63.49±0.09 & 46.33±0.13 & 60.04±0.08 \\
CTGAN & 50.80±0.16 & 73.53±0.22 & 89.01±0.39 & 69.84±0.14 & 98.60±0.10 & 61.03±0.20 & 53.04±0.21 \\
Normalising Flow & 45.11±0.13 & 60.78±0.28 & 97.84±0.35 & 66.75±0.17 & 91.02±0.13 & 44.47±0.23 & 49.29±0.13 \\
TabDDPM~\cite{nasution2025flowmatchingtabulardata} & 95.42±0.17 & 30.20±0.17 & 96.87±0.37 & 50.99±0.03 & 98.96±0.12 & 49.93±0.37 & 96.95±0.15 \\
TabSyn~\cite{nasution2025flowmatchingtabulardata}  & 98.50±0.22 & 98.14±0.30 & 98.56±0.37 & 98.46±0.15 & 98.50±0.11 & 97.97±0.32 & 98.67±0.21 \\
FM-OT & \underline{99.21±0.14} & \textbf{98.66±0.20} & \textbf{99.10±0.35} & \underline{99.15±0.16} & \underline{99.43±0.11} & \underline{98.92±0.15} & \underline{98.93±0.16} \\
VFM-VP & \textbf{99.41±0.11} & \underline{98.65±0.22} & \underline{99.07±0.40} & \textbf{99.54±0.11} & \textbf{99.81±0.04} & \textbf{99.04±0.16} & \textbf{99.61±0.14} \\
\bottomrule
\end{tabular}
    \label{tab:external-detail-al}
\end{table}

\begin{table}[h]
    \centering
    \caption{Average $\beta$-recall of selected flow models compared to external baselines across datasets. Bold entries mark the best algorithm and underline the second best for each dataset and metric.}
\begin{tabular}{lccccccc}
\toprule
Algorithm & Adult & Canada & Churn & Fiji & Indonesia & Rwanda & UK \\
\midrule
Bayesian Network & 32.37±0.18 & 9.57±0.20 & \textbf{63.26±0.56} & 29.65±0.24 & 93.0±0.06 & 59.25±0.37 & 35.94±0.14 \\
TVAE & 13.15±0.13 & 13.89±0.29 & 47.5±0.64 & 1.95±0.05 & 93.47±0.07 & 56.02±0.89 & 6.96±0.25 \\
Robust TVAE & 0.65±0.02 & 1.63±0.05 & 45.16±0.39 & 0.80±0.02 & 17.65±0.02 & 1.85±0.13 & 9.97±0.07 \\
CTGAN & 2.47±0.07 & 15.37±0.28 & 33.47±0.40 & 8.13±0.17 & 89.95±0.14 & 11.84±1.10 & 21.51±0.33 \\
Normalising Flow & 3.84±0.10 & 11.22±0.24 & 45.31±0.30 & 9.95±0.20 & 91.02±0.08 & 14.10±1.34 & 6.03±0.15 \\
TabDDPM~\cite{nasution2025flowmatchingtabulardata} & 45.35±0.19 & 0.45±0.04 & \underline{54.22±0.67} & 0.02±0.01 & 96.35±0.06 & 0.87±0.28 & \textbf{68.84±0.13} \\
TabSyn~\cite{nasution2025flowmatchingtabulardata}  & \underline{48.78±0.28} & \underline{33.39±0.19} & 50.93±0.51 & \underline{56.79±0.17} & 96.47±0.04 & 84.52±0.20 & 67.58±0.15 \\
FM-OT & 47.38±0.33 & 32.24±0.26 & 50.75±0.56 & 55.44±0.20 & \underline{96.43±0.05} & \underline{85.51±0.22} & 67.69±0.15 \\
VFM-VP & \textbf{48.82±0.31} & \textbf{35.76±0.30} & 51.75±0.47 & \textbf{57.44±0.17} & \textbf{96.56±0.05} & \textbf{85.86±0.25} & \underline{68.55±0.19} \\
\bottomrule
\end{tabular}
    \label{tab:external-detail-be}
\end{table}

\begin{table}[h]
    \centering
    \caption{Average Wasserstein Distance of selected flow models compared to external baselines across datasets. Bold entries mark the best algorithm and underline the second best for each dataset and metric.}
\begin{tabular}{lccccccc}
\toprule
Algorithm & Adult & Canada & Churn & Fiji & Indonesia & Rwanda & UK \\
\midrule
Bayesian Network & 1.98±0.01 & 4.91±0.01 & \textbf{0.15±0.00} & 4.45±0.01 & 0.45±0.00 & 2.63±0.01 & 3.43±0.00 \\
TVAE & 1.80±0.00 & 3.72±0.01 & 0.22±0.00 & 4.71±0.00 & 0.26±0.00 & 1.81±0.00 & 3.44±0.00 \\
Robust TVAE & 2.86±0.00 & 4.87±0.00 & 0.34±0.00 & 5.67±0.00 & 1.41±0.00 & 3.11±0.00 & 3.64±0.00 \\
CTGAN & 2.49±0.00 & 3.72±0.01 & 1.09±0.01 & 4.13±0.00 & 0.28±0.00 & 2.57±0.00 & 3.15±0.00 \\
Normalising Flow & 2.49±0.00 & 4.12±0.01 & 0.27±0.00 & 4.42±0.00 & 0.53±0.00 & 3.15±0.01 & 3.58±0.00 \\
TabDDPM~\cite{nasution2025flowmatchingtabulardata} & 0.84±0.00 & 8.95±0.02 & \underline{0.15±0.00} & 9.28±0.01 & \textbf{0.17±0.00} & 6.00±0.02 & \textbf{1.71±0.00} \\
TabSyn~\cite{nasution2025flowmatchingtabulardata} & \textbf{0.80±0.00} & \underline{2.56±0.01} & 0.16±0.00 & \underline{2.56±0.00} & \underline{0.18±0.00} & \underline{0.73±0.00} & \underline{1.77±0.00} \\
FM-OT & 0.82±0.01 & 2.60±0.01 & 0.16±0.00 & 2.60±0.00 & 0.18±0.00 & 0.74±0.00 & 1.78±0.00 \\
VFM-VP & \underline{0.80±0.01} & \textbf{2.49±0.01} & 0.16±0.00 & \textbf{2.55±0.00} & 0.18±0.00 & \textbf{0.70±0.00} & 1.77±0.00 \\
\bottomrule
\end{tabular}
    \label{tab:external-detail-wd}
\end{table}

\clearpage
\newpage

\subsection{Detailed Evaluation Results based on Data - ODE}
We report detailed performance under ODE sampling with 100 integration steps across seven datasets. Tables~\ref{tab:res-baseline-u}--\ref{tab:res-baseline-wd} use Euler integration, and Tables~\ref{tab:res-baseline-u-mp}--\ref{tab:res-baseline-wd-mp} use the Midpoint solver. All entries are mean $\pm$ standard deviation across 20 seeds. Some values may coincide at the reported precision due to rounding.

\begin{table}[h]
    \centering
    \caption{Average utility of flow models across datasets. Bold entries mark the best algorithm and underline the second best for each dataset and metric. Note that this table's results uses Euler ODE integration with 100 steps.}
    \begin{tabular}{lcccccccc}
    \toprule
    Dataset & FM-OT & FM-VP & SM-OT & SM-VP & NM-OT & NM-VP & VFM-OT & VFM-VP \\
    \midrule
    Adult & \underline{0.74±0.01} & 0.69±0.02 & 0.70±0.02 & 0.68±0.01 & 0.70±0.02 & 0.69±0.01 & 0.74±0.02 & \textbf{0.75±0.01} \\
    Canada & 0.69±0.02 & 0.65±0.01 & 0.63±0.01 & 0.62±0.01 & 0.64±0.01 & 0.63±0.01 & \underline{0.71±0.02} & \textbf{0.73±0.01} \\
    Churn & \textbf{0.88±0.02} & 0.80±0.02 & 0.83±0.02 & 0.84±0.02 & 0.84±0.02 & 0.85±0.02 & 0.85±0.02 & \underline{0.87±0.02} \\
    Fiji & 0.68±0.02 & 0.61±0.02 & 0.66±0.02 & 0.62±0.02 & 0.64±0.02 & 0.61±0.02 & \textbf{0.71±0.03} & \underline{0.71±0.03} \\
    Indonesia & \textbf{0.88±0.03} & 0.79±0.02 & 0.85±0.03 & 0.77±0.01 & 0.86±0.03 & 0.78±0.01 & 0.88±0.03 & \underline{0.88±0.03} \\
    Rwanda & \underline{0.67±0.02} & 0.59±0.02 & 0.62±0.01 & 0.62±0.02 & 0.63±0.02 & 0.63±0.02 & 0.67±0.03 & \textbf{0.68±0.03} \\
    UK & 0.75±0.01 & 0.68±0.01 & 0.71±0.02 & 0.70±0.01 & 0.72±0.01 & 0.68±0.01 & \textbf{0.78±0.02} & \underline{0.76±0.02} \\
\bottomrule
\end{tabular}
    \label{tab:res-baseline-u}
\end{table}

\begin{table}[ht]
    \centering
    \caption{Average risk of flow models across datasets. Bold entries mark the best algorithm and underline the second best for each dataset and metric. Note that this table's results uses Euler ODE integration with 100 steps.}
    \begin{tabular}{lcccccccc}
    \toprule
    Dataset & FM-OT & FM-VP & SM-OT & SM-VP & NM-OT & NM-VP & VFM-OT & VFM-VP \\
    \midrule
    Adult & 0.47±0.02 & \underline{0.41±0.01} & 0.44±0.02 & \textbf{0.39±0.02} & 0.44±0.02 & 0.41±0.02 & 0.49±0.02 & 0.48±0.02 \\
    Canada & 0.23±0.02 & \underline{0.21±0.02} & 0.21±0.02 & 0.21±0.01 & \textbf{0.20±0.01} & 0.21±0.02 & 0.29±0.02 & 0.29±0.01 \\
    Churn & \underline{0.04±0.05} & 0.04±0.06 & 0.05±0.07 & 0.05±0.06 & \textbf{0.03±0.05} & 0.07±0.06 & 0.06±0.06 & 0.05±0.06 \\
    Fiji & 0.55±0.01 & \textbf{0.53±0.01} & 0.54±0.01 & \underline{0.53±0.01} & 0.54±0.01 & 0.53±0.01 & 0.56±0.00 & 0.56±0.01 \\
    Indonesia & 0.66±0.01 & \textbf{0.64±0.01} & 0.65±0.01 & \underline{0.64±0.02} & 0.66±0.01 & 0.65±0.01 & 0.66±0.01 & 0.66±0.01 \\
    Rwanda & 0.50±0.02 & 0.47±0.01 & \underline{0.46±0.03} & \textbf{0.44±0.02} & 0.47±0.02 & 0.47±0.02 & 0.52±0.01 & 0.52±0.01 \\
    UK & 0.48±0.01 & \textbf{0.46±0.01} & \underline{0.47±0.01} & 0.47±0.01 & 0.47±0.01 & 0.47±0.01 & 0.49±0.01 & 0.49±0.01 \\
    \bottomrule
    \end{tabular}
    \label{tab:res-baseline-r}
\end{table}

\begin{table}[ht]
    \centering
    \caption{Average error in \textbf{shape} (\%) of flow models across datasets. Bold entries mark the best algorithm and underline the second best for each dataset and metric. Note that this table's results uses Euler ODE integration with 100 steps.}
    \begin{tabular}{lcccccccc}
    \toprule
    Dataset & FM-OT & FM-VP & SM-OT & SM-VP & NM-OT & NM-VP & VFM-OT & VFM-VP \\
    \midrule
    Adult & \textbf{0.89±0.05} & 1.33±0.05 & \underline{1.03±0.06} & 1.44±0.05 & 1.27±0.05 & 1.71±0.06 & 1.18±0.05 & 1.04±0.05 \\
    Canada & \underline{3.74±0.04} & 4.53±0.05 & 4.74±0.06 & 4.87±0.06 & 4.66±0.07 & 4.74±0.10 & 3.83±0.05 & \textbf{3.64±0.05} \\
    Churn & \textbf{1.03±0.31} & 2.51±0.13 & 1.98±0.29 & 1.82±0.16 & 1.84±0.24 & 1.66±0.22 & 1.39±0.11 & \underline{1.23±0.08} \\
    Fiji & \textbf{1.06±0.04} & 1.93±0.05 & 1.50±0.06 & 2.01±0.12 & 1.31±0.06 & 2.25±0.06 & 1.25±0.04 & \underline{1.17±0.06} \\
    Indonesia & \textbf{0.24±0.03} & 0.34±0.03 & 0.37±0.04 & 0.44±0.07 & 0.42±0.06 & 0.54±0.05 & 0.24±0.03 & \underline{0.24±0.03} \\
    Rwanda & \underline{0.83±0.05} & 1.67±0.04 & 1.18±0.06 & 1.56±0.05 & 1.26±0.04 & 1.66±0.05 & 0.97±0.04 & \textbf{0.78±0.07} \\
    UK & 0.85±0.03 & 1.44±0.04 & 1.00±0.04 & 0.98±0.05 & 0.85±0.04 & 1.06±0.04 & \underline{0.82±0.04} & \textbf{0.74±0.03} \\
    \bottomrule
    \end{tabular}
    \label{tab:res-baseline-sh}
\end{table}

\begin{table}[ht]
    \centering
    \caption{Average error in \textbf{trend} (\%) of flow models across datasets. Bold entries mark the best algorithm and underline the second best for each dataset and metric. Note that this table's results uses Euler ODE integration with 100 steps.}
    \begin{tabular}{lcccccccc}
    \toprule
    Dataset & FM-OT & FM-VP & SM-OT & SM-VP & NM-OT & NM-VP & VFM-OT & VFM-VP \\
    \midrule
    Adult & \textbf{2.13±0.14} & 2.93±0.06 & 2.28±0.09 & 2.99±0.05 & 2.43±0.08 & 3.09±0.07 & 2.51±0.08 & \underline{2.25±0.15} \\
    Canada & 8.25±0.71 & 9.55±0.53 & 9.27±0.59 & 8.75±0.20 & 9.20±0.62 & 8.42±0.14 & \underline{7.15±1.74} & \textbf{6.75±0.13} \\
    Churn & \textbf{2.46±0.52} & 4.67±0.21 & 3.76±0.45 & 3.41±0.21 & 3.39±0.33 & 3.06±0.29 & 3.12±0.18 & \underline{2.88±0.17} \\
    Fiji & 2.11±0.05 & 3.48±0.05 & 2.80±0.06 & 3.58±0.17 & 2.66±0.06 & 3.90±0.06 & \textbf{2.02±0.05} & \underline{2.06±0.06} \\
    Indonesia & \underline{0.46±0.05} & 0.74±0.03 & 0.70±0.05 & 0.83±0.08 & 0.72±0.07 & 0.90±0.06 & \textbf{0.44±0.04} & 0.47±0.03 \\
    Rwanda & \underline{1.29±0.08} & 2.38±0.07 & 2.33±0.08 & 2.99±0.22 & 2.47±0.07 & 2.84±0.09 & 1.44±0.07 & \textbf{1.24±0.07} \\
    UK & 1.86±0.05 & 3.19±0.09 & 2.29±0.07 & 2.42±0.07 & 2.11±0.07 & 2.67±0.05 & \textbf{1.72±0.05} & \underline{1.75±0.06} \\
    \bottomrule
    \end{tabular}
    \label{tab:res-baseline-t}
\end{table}

\begin{table}[ht]
    \centering
    \caption{Average \textbf{$\alpha$-precision} (\%) of flow models across datasets. Bold entries mark the best algorithm and underline the second best for each dataset and metric. Note that this table's results uses Euler ODE integration with 100 steps.}
    \begin{tabular}{lcccccccc}
    \toprule
    Dataset & FM-OT & FM-VP & SM-OT & SM-VP & NM-OT & NM-VP & VFM-OT & VFM-VP \\
    \midrule
    Adult & \underline{99.21±0.14} & 99.14±0.08 & 98.70±0.21 & 98.02±0.28 & 97.91±0.20 & 96.19±0.24 & 99.15±0.15 & \textbf{99.41±0.11} \\
    Canada & \textbf{98.66±0.20} & 98.30±0.13 & 93.49±0.33 & 94.38±0.30 & 95.05±0.31 & 94.95±0.32 & 98.38±0.28 & \underline{98.65±0.22} \\
    Churn & \textbf{99.10±0.35} & 97.76±0.50 & 96.82±0.67 & 98.29±0.55 & 97.90±0.78 & 97.23±0.98 & 98.89±0.48 & \underline{99.07±0.40} \\
    Fiji & 99.15±0.16 & 99.28±0.09 & 99.23±0.14 & 97.65±0.17 & \textbf{99.65±0.09} & 96.48±0.17 & 99.00±0.15 & \underline{99.54±0.11} \\
    Indonesia & 99.43±0.11 & \underline{99.60±0.07} & 99.42±0.10 & 99.56±0.08 & 99.58±0.09 & 98.93±0.12 & 99.40±0.12 & \textbf{99.81±0.04} \\
    Rwanda & \underline{98.92±0.15} & 98.11±0.19 & 97.69±0.3 & 94.85±0.27 & 97.60±0.31 & 95.29±0.74 & 98.58±0.25 & \textbf{99.04±0.16} \\
    UK & 98.93±0.16 & 99.54±0.06 & 99.55±0.11 & 99.40±0.15 & \textbf{99.67±0.07} & 99.07±0.17 & 99.24±0.15 & \underline{99.61±0.14} \\
    \bottomrule
    \end{tabular}
    \label{tab:res-baseline-ap}
\end{table}

\begin{table}[ht]
    \centering
    \caption{Average \textbf{$\beta$-recall} (\%) of flow models across datasets. Bold entries mark the best algorithm and underline the second best for each dataset and metric. Note that this table's results uses Euler ODE integration with 100 steps.}
    \begin{tabular}{lcccccccc}
    \toprule
    Dataset & FM-OT & FM-VP & SM-OT & SM-VP & NM-OT & NM-VP & VFM-OT & VFM-VP \\
    \midrule
    Adult & 47.38±0.33 & 45.55±0.24 & 46.03±0.21 & 44.71±0.19 & 45.84±0.19 & 44.28±0.25 & \textbf{49.10±0.28} & \underline{48.82±0.31} \\
    Canada & 32.24±0.26 & 29.22±0.24 & 28.67±0.26 & 28.12±0.26 & 28.66±0.28 & 28.30±0.22 & \textbf{36.09±0.34} & \underline{35.76±0.30} \\
    Churn & 50.75±0.56 & 50.94±0.56 & 48.68±0.51 & 49.69±0.57 & 49.31±0.81 & 49.77±0.88 & \underline{51.73±0.57} & \textbf{51.75±0.47} \\
    Fiji & 55.44±0.20 & 50.73±0.17 & 53.25±0.22 & 50.29±0.22 & 53.05±0.20 & 50.07±0.20 & \textbf{58.46±0.23} & \underline{57.44±0.17} \\
    Indonesia & 96.43±0.05 & 96.35±0.07 & 96.44±0.06 & 96.46±0.05 & 96.47±0.07 & 96.45±0.07 & \underline{96.50±0.06} & \textbf{96.56±0.05} \\
    Rwanda & 85.51±0.22 & 83.72±0.22 & 83.77±0.27 & 83.72±0.29 & 84.07±0.23 & 83.56±0.24 & \underline{85.63±0.24} & \textbf{85.86±0.25} \\
    UK & 67.69±0.15 & 64.20±0.17 & 66.16±0.11 & 65.74±0.18 & 66.49±0.12 & 64.83±0.13 & \textbf{69.33±0.13} & \underline{68.55±0.19} \\
    \bottomrule
    \end{tabular}
    \label{tab:res-baseline-br}
\end{table}

\begin{table}[ht]
    \centering
    \caption{Average Wasserstein distance of flow models across datasets. Bold entries mark the best algorithm and underline the second best for each dataset and metric. Note that this table's results uses Euler ODE integration with 100 steps.}
    \begin{tabular}{lcccccccc}
    \toprule
    Dataset & FM-OT & FM-VP & SM-OT & SM-VP & NM-OT & NM-VP & VFM-OT & VFM-VP \\
    \midrule
    Adult & 0.82±0.01 & 0.86±0.00 & 0.85±0.00 & 0.88±0.00 & 0.86±0.00 & 0.89±0.00 & \textbf{0.79±0.00}& \underline{0.80±0.01} \\
    Canada & 2.60±0.01 & 2.73±0.01 & 2.83±0.01 & 2.80±0.01 & 2.81±0.01 & 2.78±0.01 & \textbf{2.47±0.01} & \underline{2.49±0.01} \\
    Churn & \textbf{0.16±0.00}& 0.19±0.01 & 0.17±0.00 & 0.17±0.00 & 0.17±0.01 & 0.18±0.01 & 0.16±0.00 & \underline{0.16±0.00}\\
    Fiji & 2.60±0.00 & 2.73±0.00 & 2.68±0.00 & 2.76±0.00 & 2.69±0.00 & 2.78±0.00 & \textbf{2.52±0.00}& \underline{2.55±0.00}\\
    Indonesia & \underline{0.18±0.00}& 0.19±0.00 & 0.18±0.00 & 0.19±0.00 & 0.18±0.00 & 0.19±0.00 & \textbf{0.17±0.00}& 0.18±0.00 \\
    Rwanda & 0.74±0.00 & 0.79±0.00 & 0.85±0.00 & 0.89±0.01 & 0.87±0.01 & 0.88±0.03 & \textbf{0.69±0.00}& \underline{0.70±0.00}\\
    UK & 1.78±0.00 & 1.86±0.00 & 1.83±0.00 & 1.83±0.00 & 1.82±0.00 & 1.86±0.00 & \textbf{1.75±0.00}& \underline{1.77±0.00}\\
    \bottomrule
    \end{tabular}
    \label{tab:res-baseline-wd}
\end{table}

\begin{table}[ht]
    \centering
    \caption{Average utility of flow models across datasets. Bold entries mark the best algorithm and underline the second best for each dataset and metric. Note that this table's results uses Midpoint ODE integration with 100 steps.}
    \begin{tabular}{lcccccccc}
    \toprule
    Dataset & FM-OT & FM-VP & SM-OT & SM-VP & NM-OT & NM-VP & VFM-OT & VFM-VP \\
    \midrule
    Adult & \textbf{0.75±0.01} & 0.70±0.02 & 0.71±0.02 & 0.70±0.01 & 0.71±0.01 & 0.71±0.01 & 0.74±0.02 & \underline{0.75±0.02} \\
    Canada & 0.70±0.02 & 0.64±0.01 & 0.64±0.01 & 0.63±0.01 & 0.64±0.01 & 0.64±0.01 & \underline{0.71±0.02} & \textbf{0.73±0.02} \\
    Churn & \textbf{0.89±0.02} & 0.80±0.02 & 0.83±0.02 & 0.84±0.02 & 0.84±0.02 & 0.86±0.02 & 0.86±0.02 & \underline{0.87±0.02} \\
    Fiji & 0.70±0.03 & 0.63±0.02 & 0.67±0.02 & 0.64±0.02 & 0.66±0.02 & 0.63±0.02 & \textbf{0.71±0.02} & \underline{0.71±0.02} \\
    Indonesia & 0.89±0.01 & 0.81±0.03 & 0.86±0.02 & 0.83±0.04 & 0.87±0.02 & 0.83±0.03 & \textbf{0.91±0.01} & \underline{0.90±0.03} \\
    Rwanda & \underline{0.68±0.03} & 0.59±0.02 & 0.62±0.02 & 0.61±0.02 & 0.63±0.02 & 0.63±0.02 & 0.67±0.03 & \textbf{0.69±0.03} \\
    UK & 0.76±0.02 & 0.69±0.02 & 0.71±0.02 & 0.72±0.01 & 0.73±0.02 & 0.69±0.01 & \textbf{0.80±0.02} & \underline{0.79±0.02} \\
\bottomrule
\end{tabular}
    \label{tab:res-baseline-u-mp}
\end{table}

\begin{table}[ht]
    \centering
    \caption{Average risk of flow models across datasets. Bold entries mark the best algorithm and underline the second best for each dataset and metric. Note that this table's results uses Midpoint ODE integration with 100 steps.}
    \begin{tabular}{lcccccccc}
    \toprule
    Dataset & FM-OT & FM-VP & SM-OT & SM-VP & NM-OT & NM-VP & VFM-OT & VFM-VP \\
    \midrule
    Adult & 0.47±0.01 & \textbf{0.42±0.02} & 0.45±0.01 & \underline{0.42±0.02} & 0.44±0.02 & 0.43±0.02 & 0.49±0.01 & 0.49±0.02 \\
    Canada & 0.24±0.01 & \textbf{0.21±0.02} & 0.22±0.02 & 0.23±0.02 & \underline{0.21±0.01} & 0.23±0.02 & 0.29±0.01 & 0.30±0.01 \\
    Churn & \textbf{0.03±0.05} & \underline{0.04±0.06} & 0.06±0.08 & 0.05±0.06 & 0.05±0.07 & 0.06±0.07 & 0.05±0.06 & 0.06±0.06 \\
    Fiji & 0.55±0.01 & \textbf{0.53±0.01} & 0.54±0.01 & \underline{0.53±0.01} & 0.54±0.01 & 0.54±0.01 & 0.57±0.01 & 0.57±0.01 \\
    Indonesia & 0.66±0.01 & \textbf{0.64±0.01} & \underline{0.65±0.02} & 0.66±0.01 & 0.66±0.01 & 0.65±0.02 & 0.67±0.01 & 0.66±0.01 \\
    Rwanda & 0.49±0.01 & \underline{0.47±0.01} & 0.47±0.02 & \textbf{0.46±0.02} & 0.47±0.01 & 0.48±0.01 & 0.52±0.01 & 0.52±0.01 \\
    UK & 0.49±0.01 & \textbf{0.47±0.01} & \underline{0.47±0.01} & 0.48±0.01 & 0.47±0.01 & 0.47±0.01 & 0.50±0.01 & 0.49±0.01 \\
    \bottomrule
    \end{tabular}
    \label{tab:res-baseline-r-mp}
\end{table}

\begin{table}[ht]
    \centering
    \caption{Average error in \textbf{shape} (\%) of flow models across datasets. Bold entries mark the best algorithm and underline the second best for each dataset and metric. Note that this table's results uses Midpoint ODE integration with 100 steps.}
    \begin{tabular}{lcccccccc}
    \toprule
    Dataset & FM-OT & FM-VP & SM-OT & SM-VP & NM-OT & NM-VP & VFM-OT & VFM-VP \\
    \midrule
    Adult & \textbf{0.83±0.05} & 1.23±0.05 & \underline{0.98±0.09} & 1.23±0.09 & 1.18±0.07 & 1.25±0.12 & 1.20±0.04 & 1.07±0.04 \\
    Canada & \textbf{3.71±0.04} & 4.54±0.06 & 4.55±0.06 & 4.49±0.05 & 4.53±0.04 & 4.48±0.10 & 3.89±0.05 & \underline{3.73±0.05} \\
    Churn & \textbf{0.91±0.07} & 2.40±0.11 & 1.93±0.26 & 1.57±0.13 & 1.88±0.22 & 1.35±0.15 & 1.35±0.10 & \underline{1.20±0.10} \\
    Fiji & \textbf{0.99±0.04} & 1.85±0.06 & 1.36±0.05 & 1.56±0.04 & 1.18±0.05 & 1.68±0.05 & 1.22±0.05 & \underline{1.13±0.05} \\
    Indonesia & \underline{0.18±0.03} & 0.30±0.02 & 0.24±0.03 & 0.26±0.04 & 0.35±0.05 & 0.26±0.04 & \textbf{0.17±0.02} & 0.18±0.02 \\
    Rwanda & \textbf{0.88±0.05} & 1.69±0.05 & 1.06±0.04 & 1.39±0.06 & 1.17±0.04 & 1.19±0.05 & 1.02±0.04 & \underline{0.94±0.05} \\
    UK & \textbf{0.74±0.04} & 1.48±0.04 & 0.86±0.04 & 0.88±0.03 & \underline{0.77±0.04} & 0.99±0.03 & 0.81±0.05 & 0.80±0.04 \\
    \bottomrule
    \end{tabular}
    \label{tab:res-baseline-sh-mp}
\end{table}

\begin{table}[ht]
    \centering
    \caption{Average error in \textbf{trend} (\%) of flow models across datasets. Bold entries mark the best algorithm and underline the second best for each dataset and metric. Note that this table's results uses Midpoint ODE integration with 100 steps.}
    \begin{tabular}{lcccccccc}
    \toprule
    Dataset & FM-OT & FM-VP & SM-OT & SM-VP & NM-OT & NM-VP & VFM-OT & VFM-VP \\
    \midrule
    Adult & \textbf{2.01±0.05} & 2.82±0.10 & \underline{2.19±0.17} & 2.71±0.17 & 2.36±0.08 & 2.56±0.13 & 2.58±0.07 & 2.28±0.11 \\
    Canada & 8.13±0.72 & 9.43±0.52 & 8.87±0.61 & 8.00±0.20 & 9.05±0.76 & 8.16±0.16 & \textbf{6.72±0.14} & \underline{6.82±0.34} \\
    Churn & \textbf{2.24±0.13} & 4.56±0.17 & 3.65±0.43 & 3.07±0.13 & 3.45±0.29 & \underline{2.73±0.14} & 3.08±0.17 & 2.86±0.18 \\
    Fiji & 1.97±0.04 & 3.25±0.04 & 2.59±0.03 & 2.89±0.04 & 2.47±0.04 & 2.99±0.04 & \underline{1.93±0.04} & \textbf{1.86±0.03} \\
    Indonesia & 0.36±0.03 & 0.65±0.03 & 0.51±0.04 & 0.55±0.03 & 0.62±0.06 & 0.52±0.04 & \textbf{0.32±0.03} & \underline{0.35±0.03} \\
    Rwanda & \textbf{1.27±0.08} & 2.36±0.06 & 2.22±0.08 & 2.22±0.19 & 2.21±0.06 & 1.98±0.08 & 1.46±0.06 & \underline{1.36±0.08} \\
    UK & 1.69±0.05 & 3.05±0.07 & 2.06±0.09 & 2.08±0.03 & 1.98±0.05 & 2.38±0.05 & \underline{1.66±0.05} & \textbf{1.65±0.04} \\
    \bottomrule
    \end{tabular}
    \label{tab:res-baseline-t-mp}
\end{table}

\begin{table}[ht]
    \centering
    \caption{Average \textbf{$\alpha$-precision} (\%) of flow models across datasets. Bold entries mark the best algorithm and underline the second best for each dataset and metric. Note that this table's results uses Midpoint ODE integration with 100 steps.}
    \begin{tabular}{lcccccccc}
    \toprule
    Dataset & FM-OT & FM-VP & SM-OT & SM-VP & NM-OT & NM-VP & VFM-OT & VFM-VP \\
    \midrule
    Adult & 98.77±0.72 & 99.22±0.09 & 98.75±0.21 & \underline{99.35±0.09} & 97.93±0.51 & \textbf{99.48±0.11} & 99.22±0.15 & 99.11±0.17 \\
    Canada & 98.69±0.19 & 98.45±0.21 & 94.62±0.30 & \underline{98.75±0.21} & 95.49±0.21 & \textbf{98.86±0.13} & 97.99±0.22 & 97.97±0.22 \\
    Churn & 98.33±0.84 & 97.85±0.66 & 97.04±0.72 & \textbf{99.07±0.23} & 98.17±0.88 & 98.28±0.44 & 98.92±0.50 & \underline{99.07±0.40} \\
    Fiji & 99.25±0.15 & 98.77±0.17 & \textbf{99.46±0.64} & 99.02±0.19 & \underline{99.44±0.12} & 99.10±0.17 & 99.24±0.16 & 99.25±0.16 \\
    Indonesia & 99.63±0.10 & 99.51±0.10 & 99.67±0.06 & 99.34±0.1 & 99.61±0.08 & 99.47±0.11 & \textbf{99.81±0.06} & \underline{99.79±0.07} \\
    Rwanda & \underline{98.78±0.15} & 98.13±0.23 & 97.93±0.52 & 98.41±0.24 & 98.34±0.29 & \textbf{98.99±0.13} & 98.59±0.26 & 98.70±0.26 \\
    UK & \underline{99.40±0.16} & 99.09±0.15 & 99.37±0.15 & 98.86±0.16 & 99.20±0.16 & 99.22±0.15 & \textbf{99.5±0.15} & 99.32±0.16 \\
    \bottomrule
    \end{tabular}
    \label{tab:res-baseline-ap-mp}
\end{table}

\begin{table}[ht]
    \centering
    \caption{Average \textbf{$\beta$-recall} (\%) of flow models across datasets. Bold entries mark the best algorithm and underline the second best for each dataset and metric. Note that this table's results uses Midpoint ODE integration with 100 steps.}
    \begin{tabular}{lcccccccc}
    \toprule
    Dataset & FM-OT & FM-VP & SM-OT & SM-VP & NM-OT & NM-VP & VFM-OT & VFM-VP \\
    \midrule
    Adult & 46.74±0.71 & 45.94±0.24 & 46.09±0.22 & 45.77±0.26 & 45.91±0.57 & 46.14±0.24 & \underline{49.09±0.23} & \textbf{49.35±0.25} \\
    Canada & 32.18±0.24 & 29.6±0.24 & 28.82±0.26 & 29.78±0.26 & 28.85±0.26 & 29.95±0.36 & \underline{36.45±0.24} & \textbf{36.51±0.31} \\
    Churn & 49.76±0.93 & 50.37±0.7 & 48.68±0.49 & 50.1±0.56 & 49.63±1.08 & 49.78±0.71 & \underline{51.73±0.57} & \textbf{51.79±0.47} \\
    Fiji & 55.62±0.19 & 52.1±0.17 & 53.11±0.69 & 52.25±0.16 & 53.27±0.22 & 52.38±0.21 & \textbf{58.84±0.24} & \underline{58.74±0.25} \\
    Indonesia & 96.44±0.06 & 96.3±0.06 & 96.49±0.04 & 96.39±0.07 & \underline{96.52±0.05} & 96.35±0.06 & \textbf{96.54±0.05} & 96.51±0.04 \\
    Rwanda & \underline{85.47±0.23} & 83.9±0.24 & 83.87±0.26 & 83.96±0.26 & 84.16±0.21 & 83.82±0.13 & 85.36±0.24 & \textbf{85.82±0.24} \\
    UK & 67.69±0.17 & 64.96±0.13 & 66.21±0.13 & 66.35±0.14 & 66.47±0.15 & 65.57±0.15 & \textbf{69.53±0.11} & \underline{69.32±0.17} \\
    \bottomrule
    \end{tabular}
    \label{tab:res-baseline-br-mp}
\end{table}

\begin{table}[ht]
    \centering
    \caption{Average Wasserstein distance of flow models across datasets. Bold entries mark the best algorithm and underline the second best for each dataset and metric. Note that this table's results uses Midpoint ODE integration with 100 steps.}
    \begin{tabular}{lcccccccc}
    \toprule
    Dataset & FM-OT & FM-VP & SM-OT & SM-VP & NM-OT & NM-VP & VFM-OT & VFM-VP \\
    \midrule
    Adult & 0.83±0.02 & 0.85±0.00 & 0.85±0.00 & 0.85±0.00 & 0.85±0.01 & 0.84±0.00 & \textbf{0.79±0.00}& \underline{0.79±0.00}\\
    Canada & 2.59±0.01 & 2.69±0.01 & 2.79±0.01 & 2.69±0.00 & 2.79±0.00 & 2.7±0.00 & \textbf{2.44±0.00}& \underline{2.44±0.00}\\
    Churn & 0.17±0.01 & 0.18±0.01 & 0.17±0.00 & 0.17±0.00 & 0.17±0.01 & 0.17±0.01 & \textbf{0.16±0.00}& \underline{0.16±0.00}\\
    Fiji & 2.59±0.00 & 2.67±0.00 & 2.68±0.02 & 2.67±0.00 & 2.68±0.00 & 2.67±0.00 & \underline{2.51±0.00}& \textbf{2.50±0.00}\\
    Indonesia & 0.18±0.00 & 0.18±0.00 & 0.18±0.00 & 0.18±0.00 & 0.18±0.00 & 0.18±0.00 & \textbf{0.17±0.00}& \underline{0.17±0.00}\\
    Rwanda & 0.73±0.00 & 0.77±0.01 & 0.84±0.01 & 0.79±0.00 & 0.84±0.00 & 0.8±0.00 & \textbf{0.69±0.00}& \underline{0.69±0.00}\\
    UK & 1.77±0.00 & 1.83±0.00 & 1.82±0.00 & 1.80±0.00 & 1.82±0.00 & 1.82±0.00 & \textbf{1.74±0.00}& \underline{1.74±0.00}\\
    \bottomrule
    \end{tabular}
    \label{tab:res-baseline-wd-mp}
\end{table}

\clearpage
\subsection{Detailed Evaluation Results based on Data - SDE}

We evaluate the same set of models under SDE sampling with 100 integration steps. Tables~\ref{tab:res-baseline-u-sde}--\ref{tab:res-baseline-wd-sde} report results using Euler-Maruyama, and Tables~\ref{tab:res-baseline-u-sde-mp}--\ref{tab:res-baseline-wd-sde-mp} report results using the Midpoint solver. All entries are mean $\pm$ standard deviation across 20 seeds. Some values may coincide at the reported precision due to rounding to two decimal places.

\begin{table}[ht]
    \centering
    \caption{Average utility of flow models across datasets. Bold entries mark the best algorithm and underline the second best for each dataset and metric. Note that this table's results uses Euler-Maruyama SDE integration with 100 steps.}
    \begin{tabular}{lcccccccc}
    \toprule
    Dataset & FM-OT & FM-VP & SM-OT & SM-VP & NM-OT & NM-VP & VFM-OT & VFM-VP \\
    \midrule
    Adult & \underline{0.75±0.02} & 0.69±0.01 & 0.72±0.01 & 0.69±0.01 & 0.73±0.01 & 0.71±0.02 & 0.75±0.01 & \textbf{0.76±0.01} \\
    Canada & 0.70±0.01 & 0.65±0.01 & 0.65±0.01 & 0.63±0.01 & 0.66±0.01 & 0.64±0.01 & \underline{0.71±0.02} & \textbf{0.73±0.02} \\
    Churn & \textbf{0.88±0.02} & 0.80±0.02 & 0.86±0.02 & 0.85±0.01 & 0.86±0.02 & 0.86±0.01 & 0.85±0.02 & \underline{0.87±0.01} \\
    Fiji & \underline{0.70±0.02} & 0.62±0.01 & 0.66±0.02 & 0.63±0.02 & 0.65±0.02 & 0.62±0.02 & \textbf{0.71±0.02} & 0.70±0.02 \\
    Indonesia & \textbf{0.89±0.02} & 0.79±0.01 & 0.86±0.02 & 0.81±0.02 & \underline{0.89±0.01} & 0.81±0.02 & 0.89±0.03 & 0.87±0.03 \\
    Rwanda & \textbf{0.68±0.02} & 0.58±0.01 & 0.62±0.02 & 0.63±0.02 & 0.63±0.01 & 0.63±0.02 & 0.66±0.02 & \underline{0.67±0.02} \\
    UK & 0.75±0.02 & 0.68±0.01 & 0.72±0.02 & 0.7±0.02 & 0.73±0.02 & 0.68±0.01 & \textbf{0.78±0.02} & \underline{0.77±0.02} \\
\bottomrule
\end{tabular}
    \label{tab:res-baseline-u-sde}
\end{table}

\begin{table}[ht]
    \centering
    \caption{Average risk of flow models across datasets. Bold entries mark the best algorithm and underline the second best for each dataset and metric. Note that this table's results uses Euler-Maruyama SDE integration with 100 steps.}
    \begin{tabular}{lcccccccc}
    \toprule
    Dataset & FM-OT & FM-VP & SM-OT & SM-VP & NM-OT & NM-VP & VFM-OT & VFM-VP \\
    \midrule
    Adult & 0.47±0.02 & \underline{0.41±0.02} & 0.45±0.02 & \textbf{0.39±0.02} & 0.45±0.02 & 0.41±0.01 & 0.49±0.02 & 0.48±0.02 \\
    Canada & 0.24±0.02 & \textbf{0.21±0.02} & 0.22±0.02 & \underline{0.21±0.02} & 0.22±0.02 & 0.22±0.02 & 0.30±0.01 & 0.29±0.01 \\
    Churn & 0.06±0.06 & 0.06±0.07 & \textbf{0.03±0.04} & \underline{0.05±0.06} & 0.08±0.07 & 0.07±0.08 & 0.07±0.07 & 0.07±0.08 \\
    Fiji & 0.55±0.01 & \textbf{0.53±0.01} & 0.54±0.01 & \underline{0.53±0.01} & 0.54±0.01 & 0.53±0.01 & 0.56±0.01 & 0.56±0.01 \\
    Indonesia & 0.66±0.01 & \textbf{0.64±0.01} & \underline{0.65±0.02} & 0.65±0.02 & 0.65±0.01 & 0.65±0.01 & 0.66±0.01 & 0.66±0.01 \\
    Rwanda & 0.50±0.01 & \underline{0.47±0.01} & 0.47±0.02 & \textbf{0.45±0.02} & 0.48±0.01 & 0.47±0.01 & 0.52±0.01 & 0.52±0.01 \\
    UK & 0.48±0.01 & \textbf{0.46±0.01} & 0.48±0.01 & \underline{0.47±0.01} & 0.48±0.01 & 0.47±0.00 & 0.49±0.01 & 0.49±0.01 \\
    \bottomrule
    \end{tabular}
    \label{tab:res-baseline-r-sde}
\end{table}

\begin{table}[ht]
    \centering
    \caption{Average error in \textbf{shape} (\%) of flow models across datasets. Bold entries mark the best algorithm and underline the second best for each dataset and metric. Note that this table's results uses Euler-Maruyama SDE integration with 100 steps.}
    \begin{tabular}{lcccccccc}
    \toprule
    Dataset & FM-OT & FM-VP & SM-OT & SM-VP & NM-OT & NM-VP & VFM-OT & VFM-VP \\
    \midrule
    Adult & \textbf{0.83±0.04} & 1.30±0.06 & \underline{0.83±0.05} & 1.31±0.06 & 0.96±0.09 & 1.46±0.06 & 1.17±0.04 & 1.04±0.04 \\
    Canada & \underline{3.71±0.05} & 4.55±0.06 & 4.33±0.12 & 4.69±0.06 & 4.20±0.05 & 4.60±0.05 & 3.83±0.05 & \textbf{3.69±0.08} \\
    Churn & \textbf{0.90±0.10} & 2.53±0.09 & 1.27±0.16 & 1.56±0.16 & 1.28±0.19 & 1.49±0.24 & 1.38±0.10 & \underline{1.26±0.11} \\
    Fiji & \textbf{1.01±0.04} & 1.91±0.05 & 1.33±0.04 & 1.89±0.06 & 1.27±0.05 & 2.03±0.06 & 1.23±0.05 & \underline{1.17±0.06} \\
    Indonesia & \textbf{0.21±0.03} & 0.33±0.04 & 0.27±0.04 & 0.32±0.05 & 0.24±0.03 & 0.35±0.05 & \underline{0.21±0.03} & 0.23±0.04 \\
    Rwanda & \underline{0.83±0.05} & 1.66±0.04 & 1.01±0.04 & 1.28±0.04 & 1.06±0.05 & 1.35±0.05 & 0.98±0.04 & \textbf{0.78±0.05} \\
    UK & \underline{0.77±0.04} & 1.43±0.04 & 0.84±0.03 & 0.94±0.07 & 0.77±0.03 & 1.02±0.03 & 0.80±0.04 & \textbf{0.73±0.03} \\
    \bottomrule
    \end{tabular}
    \label{tab:res-baseline-sh-sde}
\end{table}

\begin{table}[ht]
    \centering
    \caption{Average error in \textbf{trend} (\%) of flow models across datasets. Bold entries mark the best algorithm and underline the second best for each dataset and metric. Note that this table's results uses Euler-Maruyama SDE integration with 100 steps.}
    \begin{tabular}{lcccccccc}
    \toprule
    Dataset & FM-OT & FM-VP & SM-OT & SM-VP & NM-OT & NM-VP & VFM-OT & VFM-VP \\
    \midrule
    Adult & \textbf{2.03±0.06} & 2.90±0.07 & \underline{2.04±0.06} & 2.85±0.09 & 2.19±0.14 & 2.81±0.08 & 2.50±0.05 & 2.29±0.16 \\
    Canada & 8.11±0.74 & 9.66±0.49 & 8.98±0.73 & 8.44±0.21 & 9.06±0.78 & 8.23±0.13 & \underline{7.49±2.28} & \textbf{6.81±0.30} \\
    Churn & \textbf{2.32±0.22} & 4.71±0.13 & 2.55±0.28 & 3.00±0.22 & \underline{2.54±0.31} & 2.82±0.28 & 3.12±0.19 & 2.88±0.21 \\
    Fiji & \underline{2.05±0.04} & 3.46±0.06 & 2.59±0.05 & 3.40±0.06 & 2.54±0.04 & 3.58±0.05 & \textbf{2.00±0.04} & 2.05±0.04 \\
    Indonesia & \textbf{0.41±0.04} & 0.73±0.04 & 0.54±0.04 & 0.66±0.05 & 0.48±0.04 & 0.65±0.05 & \underline{0.41±0.04} & 0.46±0.05 \\
    Rwanda & \underline{1.25±0.08} & 2.38±0.05 & 2.07±0.08 & 2.62±0.12 & 2.05±0.10 & 2.44±0.11 & 1.43±0.06 & \textbf{1.24±0.07} \\
    UK & 1.75±0.06 & 3.18±0.05 & 2.06±0.05 & 2.35±0.13 & 1.97±0.04 & 2.59±0.05 & \textbf{1.70±0.04} & \underline{1.74±0.04} \\
    \bottomrule
    \end{tabular}
    \label{tab:res-baseline-t-sde}
\end{table}

\begin{table}[ht]
    \centering
    \caption{Average \textbf{$\alpha$-precision} (\%) of flow models across datasets. Bold entries mark the best algorithm and underline the second best for each dataset and metric. Note that this table's results uses Euler-Maruyama SDE integration with 100 steps.}
    \begin{tabular}{lcccccccc}
    \toprule
    Dataset & FM-OT & FM-VP & SM-OT & SM-VP & NM-OT & NM-VP & VFM-OT & VFM-VP \\
    \midrule
    Adult & \underline{99.33±0.13} & 99.16±0.12 & 98.94±0.20 & 98.86±0.18 & 98.82±0.19 & 97.6±0.23 & 99.21±0.15 & \textbf{99.38±0.09} \\
    Canada & \textbf{98.62±0.15} & 98.38±0.12 & 95.74±0.21 & 95.83±0.20 & 96.92±0.19 & 96.2±0.20 & 98.25±0.18 & \underline{98.61±0.17} \\
    Churn & \textbf{99.29±0.15} & 97.77±0.48 & 98.02±0.46 & 98.32±0.45 & 98.19±0.60 & 97.90±0.42 & 98.89±0.35 & \underline{99.12±0.31} \\
    Fiji & 99.36±0.16 & 99.28±0.13 & 99.24±0.19 & 98.59±0.13 & \textbf{99.54±0.10} & 97.96±0.15 & 99.08±0.20 & \underline{99.49±0.10} \\
    Indonesia & 99.49±0.10 & 99.58±0.06 & 99.43±0.10 & \underline{99.68±0.06} & 99.50±0.10 & 99.62±0.07 & 99.50±0.11 & \textbf{99.82±0.03} \\
    Rwanda & \underline{98.85±0.14} & 98.09±0.20 & 98.41±0.31 & 97.26±0.44 & 98.54±0.26 & 97.25±0.54 & 98.54±0.28 & \textbf{98.98±0.18} \\
    UK & 99.29±0.13 & 99.54±0.04 & \textbf{99.67±0.06} & 99.58±0.04 & \underline{99.63±0.15} & 99.53±0.12 & 99.42±0.13 & 99.60±0.13 \\
    \bottomrule
    \end{tabular}
    \label{tab:res-baseline-ap-sde}
\end{table}

\begin{table}[ht]
    \centering
    \caption{Average \textbf{$\beta$-recall} (\%) of flow models across datasets. Bold entries mark the best algorithm and underline the second best for each dataset and metric. Note that this table's results uses Euler-Maruyama SDE integration with 100 steps.}
    \begin{tabular}{lcccccccc}
    \toprule
    Dataset & FM-OT & FM-VP & SM-OT & SM-VP & NM-OT & NM-VP & VFM-OT & VFM-VP \\
    \midrule
    Adult & 47.45±0.19 & 45.39±0.19 & 46.50±0.26 & 45.19±0.28 & 46.35±0.30 & 44.93±0.21 & \textbf{49.16±0.23} & \underline{48.78±0.21} \\
    Canada & 32.19±0.24 & 29.18±0.29 & 29.41±0.27 & 28.69±0.32 & 29.36±0.30 & 28.71±0.27 & \textbf{36.09±0.24} & \underline{35.67±0.29} \\
    Churn & 50.56±0.39 & 50.89±0.77 & 49.31±0.46 & 49.70±0.58 & 49.66±0.38 & 49.63±0.71 & \underline{51.52±0.49} & \textbf{51.67±0.47} \\
    Fiji & 55.62±0.23 & 50.79±0.19 & 53.54±0.25 & 50.68±0.25 & 53.52±0.24 & 50.50±0.21 & \textbf{58.55±0.17} & \underline{57.46±0.19} \\
    Indonesia & 96.45±0.06 & 96.33±0.07 & 96.46±0.05 & 96.41±0.05 & 96.48±0.05 & 96.41±0.05 & \underline{96.55±0.05} & \textbf{96.58±0.03} \\
    Rwanda & 85.49±0.22 & 83.74±0.21 & 84.14±0.15 & 83.77±0.24 & 84.31±0.25 & 83.72±0.18 & \underline{85.72±0.22} & \textbf{85.85±0.16} \\
    UK & 67.70±0.18 & 64.19±0.13 & 66.34±0.18 & 65.78±0.16 & 66.73±0.26 & 64.95±0.16 & \textbf{69.31±0.16} & \underline{68.56±0.13} \\
    \bottomrule
    \end{tabular}
    \label{tab:res-baseline-br-sde}
\end{table}

\begin{table}[ht]
    \centering
    \caption{Average Wasserstein distance (\%) of flow models across datasets. Bold entries mark the best algorithm and underline the second best for each dataset and metric. Note that this table's results uses Euler-Maruyama integration with 100 steps.}
    \begin{tabular}{lcccccccc}
    \toprule
    Dataset & FM-OT & FM-VP & SM-OT & SM-VP & NM-OT & NM-VP & VFM-OT & VFM-VP \\
    \midrule
    Adult & 0.82±0.00 & 0.86±0.00 & 0.84±0.00 & 0.87±0.00 & 0.85±0.01 & 0.88±0.00 & \textbf{0.79±0.00}& \underline{0.80±0.00}\\
    Canada & 2.59±0.00 & 2.73±0.00 & 2.77±0.01 & 2.78±0.01 & 2.76±0.01 & 2.76±0.01 & \textbf{2.46±0.01} & \underline{2.48±0.00}\\
    Churn & \textbf{0.16±0.00}& 0.19±0.01 & 0.17±0.00 & 0.17±0.00 & 0.17±0.00 & 0.17±0.00 & 0.16±0.00 & \underline{0.16±0.00}\\
    Fiji & 2.60±0.00 & 2.72±0.00 & 2.67±0.00 & 2.74±0.00 & 2.67±0.00 & 2.75±0.00 & \textbf{2.52±0.00}& \underline{2.55±0.00}\\
    Indonesia & \underline{0.18±0.00}& 0.19±0.00 & 0.18±0.00 & 0.19±0.00 & 0.18±0.00 & 0.19±0.00 & \textbf{0.17±0.00}& 0.18±0.00 \\
    Rwanda & 0.74±0.00 & 0.79±0.00 & 0.83±0.00 & 0.85±0.02 & 0.83±0.01 & 0.85±0.01 & \textbf{0.69±0.00}& \underline{0.70±0.00}\\
    UK & 1.78±0.00 & 1.86±0.00 & 1.82±0.00 & 1.83±0.00 & 1.81±0.01 & 1.85±0.00 & \textbf{1.75±0.00}& \underline{1.77±0.00}\\
    \bottomrule
    \end{tabular}
    \label{tab:res-baseline-wd-sde}
\end{table}

\begin{table}[ht]
    \centering
    \caption{Average utility of flow models across datasets. Bold entries mark the best algorithm and underline the second best for each dataset and metric. Note that this table's results uses Midpoint SDE integration with 100 steps.}
    \begin{tabular}{lcccccccc}
    \toprule
    Dataset & FM-OT & FM-VP & SM-OT & SM-VP & NM-OT & NM-VP & VFM-OT & VFM-VP \\
    \midrule
    Adult & \textbf{0.75±0.02} & 0.70±0.01 & 0.73±0.01 & 0.70±0.02 & 0.73±0.01 & 0.71±0.01 & 0.74±0.02 & \underline{0.75±0.01} \\
    Canada & 0.70±0.02 & 0.65±0.01 & 0.65±0.01 & 0.64±0.01 & 0.66±0.01 & 0.64±0.01 & \underline{0.71±0.02} & \textbf{0.74±0.02} \\
    Churn & \textbf{0.89±0.02} & 0.80±0.01 & 0.86±0.02 & 0.85±0.02 & 0.85±0.01 & \underline{0.87±0.02} & 0.85±0.02 & 0.87±0.01 \\
    Fiji & \underline{0.71±0.02} & 0.64±0.01 & 0.68±0.01 & 0.64±0.02 & 0.66±0.02 & 0.63±0.02 & \textbf{0.72±0.02} & 0.71±0.02 \\
    Indonesia & 0.90±0.03 & 0.81±0.03 & 0.88±0.02 & 0.84±0.03 & 0.89±0.02 & 0.83±0.02 & \textbf{0.91±0.02} & \underline{0.91±0.01} \\
    Rwanda & \textbf{0.69±0.02} & 0.59±0.02 & 0.62±0.02 & 0.62±0.02 & 0.63±0.02 & 0.62±0.02 & 0.67±0.02 & \underline{0.68±0.02} \\
    UK & 0.77±0.02 & 0.68±0.01 & 0.73±0.02 & 0.72±0.01 & 0.74±0.02 & 0.69±0.01 & \textbf{0.80±0.02} & \underline{0.79±0.02} \\
\bottomrule
\end{tabular}
    \label{tab:res-baseline-u-sde-mp}
\end{table}

\begin{table}[ht]
    \centering
    \caption{Average risk of flow models across datasets. Bold entries mark the best algorithm and underline the second best for each dataset and metric. Note that this table's results uses Midpoint SDE integration with 100 steps.}
    \begin{tabular}{lcccccccc}
    \toprule
    Dataset & FM-OT & FM-VP & SM-OT & SM-VP & NM-OT & NM-VP & VFM-OT & VFM-VP \\
    \midrule
    Adult & 0.47±0.02 & \textbf{0.42±0.01} & 0.46±0.02 & \underline{0.42±0.01} & 0.45±0.02 & 0.42±0.01 & 0.50±0.02 & 0.49±0.02 \\
    Canada & 0.24±0.02 & \textbf{0.22±0.02} & \underline{0.22±0.02} & 0.22±0.01 & 0.22±0.01 & 0.22±0.02 & 0.30±0.02 & 0.30±0.01 \\
    Churn & \textbf{0.05±0.06} & \underline{0.05±0.07} & 0.05±0.05 & 0.05±0.07 & 0.08±0.07 & 0.07±0.08 & 0.07±0.07 & 0.07±0.07 \\
    Fiji & 0.55±0.01 & \textbf{0.54±0.01} & \underline{0.54±0.01} & 0.54±0.01 & 0.54±0.01 & 0.54±0.01 & 0.57±0.01 & 0.56±0.01 \\
    Indonesia & 0.66±0.01 & \textbf{0.64±0.01} & \underline{0.65±0.02} & 0.65±0.01 & 0.66±0.01 & 0.65±0.02 & 0.66±0.01 & 0.65±0.01 \\
    Rwanda & 0.50±0.01 & 0.48±0.01 & \underline{0.46±0.02} & \textbf{0.46±0.02} & 0.48±0.01 & 0.48±0.01 & 0.52±0.01 & 0.52±0.01 \\
    UK & 0.49±0.01 & \textbf{0.47±0.01} & 0.48±0.01 & \underline{0.47±0.01} & 0.48±0.01 & 0.47±0.01 & 0.49±0.01 & 0.49±0.01 \\
    \bottomrule
    \end{tabular}
    \label{tab:res-baseline-r-sde-mp}
\end{table}

\begin{table}[ht]
    \centering
    \caption{Average error in \textbf{shape} (\%) of flow models across datasets. Bold entries mark the best algorithm and underline the second best for each dataset and metric. Note that this table's results uses Midpoint SDE integration with 100 steps.}
    \begin{tabular}{lcccccccc}
    \toprule
    Dataset & FM-OT & FM-VP & SM-OT & SM-VP & NM-OT & NM-VP & VFM-OT & VFM-VP \\
    \midrule
    Adult & \underline{0.80±0.05} & 1.23±0.04 & \textbf{0.78±0.09} & 1.23±0.05 & 0.92±0.10 & 1.30±0.20 & 1.21±0.04 & 1.09±0.04 \\
    Canada & \textbf{3.71±0.05} & 4.57±0.06 & 4.30±0.17 & 4.51±0.06 & 4.15±0.04 & 4.47±0.06 & 3.92±0.05 & \underline{3.74±0.05} \\
    Churn & \textbf{0.85±0.10} & 2.43±0.10 & 1.32±0.22 & 1.28±0.24 & 1.28±0.11 & \underline{1.17±0.15} & 1.33±0.10 & 1.20±0.12 \\
    Fiji & \textbf{0.98±0.04} & 1.86±0.05 & 1.23±0.05 & 1.59±0.04 & \underline{1.15±0.05} & 1.68±0.05 & 1.23±0.04 & 1.15±0.04 \\
    Indonesia & \textbf{0.16±0.02} & 0.29±0.02 & 0.21±0.08 & 0.25±0.04 & \underline{0.17±0.03} & 0.24±0.03 & 0.17±0.02 & 0.18±0.03 \\
    Rwanda & \textbf{0.88±0.06} & 1.69±0.05 & 1.02±0.04 & 1.43±0.06 & 1.08±0.05 & 1.27±0.05 & 1.03±0.05 & \underline{0.94±0.05} \\
    UK & \underline{0.70±0.03} & 1.48±0.04 & 0.70±0.03 & 0.87±0.04 & \textbf{0.69±0.08} & 0.97±0.03 & 0.82±0.03 & 0.80±0.02 \\
    \bottomrule
    \end{tabular}
    \label{tab:res-baseline-sh-sde-mp}
\end{table}

\begin{table}[ht]
    \centering
    \caption{Average error in \textbf{trend} (\%) of flow models across datasets. Bold entries mark the best algorithm and underline the second best for each dataset and metric. Note that this table's results uses Midpoint SDE integration with 100 steps.}
    \begin{tabular}{lcccccccc}
    \toprule
    Dataset & FM-OT & FM-VP & SM-OT & SM-VP & NM-OT & NM-VP & VFM-OT & VFM-VP \\
    \midrule
    Adult & \underline{1.98±0.06} & 2.78±0.06 & \textbf{1.94±0.12} & 2.71±0.07 & 2.09±0.10 & 2.63±0.24 & 2.61±0.05 & 2.29±0.12 \\
    Canada & 8.07±0.74 & 9.57±0.43 & 8.90±0.77 & 8.08±0.24 & 8.83±0.78 & 8.11±0.16 & \underline{7.53±2.30} & \textbf{6.81±0.28} \\
    Churn & \textbf{2.24±0.20} & 4.61±0.11 & 2.57±0.32 & 2.72±0.33 & 2.50±0.15 & \underline{2.46±0.17} & 3.08±0.20 & 2.88±0.21 \\
    Fiji & \underline{1.92±0.04} & 3.25±0.04 & 2.42±0.04 & 2.92±0.04 & 2.34±0.04 & 3.00±0.05 & 1.93±0.04 & \textbf{1.88±0.04} \\
    Indonesia & \underline{0.33±0.02} & 0.64±0.02 & 0.44±0.13 & 0.54±0.05 & 0.38±0.04 & 0.49±0.04 & \textbf{0.31±0.02} & 0.33±0.03 \\
    Rwanda & \textbf{1.23±0.06} & 2.35±0.05 & 2.01±0.09 & 2.17±0.07 & 1.87±0.09 & 2.01±0.07 & 1.44±0.05 & \underline{1.35±0.07} \\
    UK & \textbf{1.61±0.04} & 3.04±0.06 & 1.82±0.04 & 2.08±0.04 & 1.84±0.18 & 2.36±0.04 & 1.65±0.05 & \underline{1.64±0.03} \\
    \bottomrule
    \end{tabular}
    \label{tab:res-baseline-t-sde-mp}
\end{table}

\begin{table}[ht]
    \centering
    \caption{Average \textbf{$\alpha$-precision} (\%) of flow models across datasets. Bold entries mark the best algorithm and underline the second best for each dataset and metric. Note that this table's results uses Midpoint SDE integration with 100 steps.}
    \begin{tabular}{lcccccccc}
    \toprule
    Dataset & FM-OT & FM-VP & SM-OT & SM-VP & NM-OT & NM-VP & VFM-OT & VFM-VP \\
    \midrule
    Adult & \underline{99.25±0.33} & 99.20±0.12 & 99.18±0.16 & 99.24±0.11 & 98.87±0.19 & \textbf{99.35±0.11} & 99.11±0.17 & 99.16±0.15 \\
    Canada & 98.55±0.17 & 98.35±0.18 & 96.71±0.19 & \underline{98.65±0.17} & 97.40±0.19 & \textbf{98.76±0.15} & 97.83±0.21 & 97.96±0.20 \\
    Churn & \textbf{99.12±0.54} & 97.97±0.49 & 98.18±0.46 & 98.89±0.65 & 98.40±0.57 & 98.85±0.45 & 98.95±0.32 & \underline{99.10±0.30} \\
    Fiji & 99.16±0.18 & 98.72±0.19 & \underline{99.27±0.18} & 98.81±0.19 & \textbf{99.50±0.11} & 98.91±0.16 & 99.00±0.20 & 99.18±0.18 \\
    Indonesia & 99.63±0.09 & 99.47±0.08 & 99.73±0.05 & 99.34±0.10 & 99.72±0.05 & 99.40±0.10 & \underline{99.75±0.09} & \textbf{99.78±0.07} \\
    Rwanda & 98.71±0.17 & 98.01±0.21 & \textbf{98.82±0.25} & 98.06±0.32 & \underline{98.82±0.16} & 98.72±0.20 & 98.44±0.31 & 98.59±0.29 \\
    UK & 99.33±0.14 & 99.07±0.14 & \underline{99.64±0.08} & 98.90±0.14 & \textbf{99.71±0.05} & 99.16±0.13 & 99.30±0.14 & 99.29±0.14 \\
    \bottomrule
    \end{tabular}
    \label{tab:res-baseline-ap-sde-mp}
\end{table}

\begin{table}[ht]
    \centering
    \caption{Average \textbf{$\beta$-recall} (\%) of flow models across datasets. Bold entries mark the best algorithm and underline the second best for each dataset and metric. Note that this table's results uses Midpoint SDE integration with 100 steps.}
    \begin{tabular}{lcccccccc}
    \toprule
    Dataset & FM-OT & FM-VP & SM-OT & SM-VP & NM-OT & NM-VP & VFM-OT & VFM-VP \\
    \midrule
    Adult & 47.39±0.45 & 45.90±0.19 & 46.61±0.20 & 46.01±0.21 & 46.32±0.28 & 46.21±0.20 & \underline{49.27±0.24} & \textbf{49.29±0.21} \\
    Canada & 32.24±0.24 & 29.57±0.33 & 29.62±0.29 & 30.07±0.28 & 29.52±0.32 & 30.15±0.25 & \underline{36.46±0.22} & \textbf{36.57±0.29} \\
    Churn & 50.22±0.52 & 50.83±0.54 & 49.22±0.43 & 49.71±0.76 & 49.76±0.55 & 49.84±0.66 & \underline{51.57±0.55} & \textbf{51.76±0.49} \\
    Fiji & 56.01±0.20 & 52.30±0.18 & 53.95±0.23 & 52.28±0.43 & 53.91±0.25 & 52.42±0.22 & \textbf{59.13±0.23} & \underline{58.76±0.16} \\
    Indonesia & 96.44±0.07 & 96.29±0.06 & 96.46±0.07 & 96.37±0.05 & 96.49±0.06 & 96.37±0.04 & \textbf{96.53±0.05} & \underline{96.50±0.03} \\
    Rwanda & \underline{85.56±0.23} & 83.88±0.20 & 84.25±0.27 & 83.89±0.26 & 84.42±0.25 & 83.83±0.17 & 85.43±0.18 & \textbf{85.78±0.16} \\
    UK & 67.81±0.33 & 64.95±0.14 & 66.57±0.18 & 66.33±0.22 & 66.82±0.15 & 65.60±0.19 & \textbf{69.61±0.16} & \underline{69.32±0.15} \\
    \bottomrule
    \end{tabular}
    \label{tab:res-baseline-br-sde-mp}
\end{table}

\begin{table}[ht]
    \centering
    \caption{Average Wasserstein distance (\%) of flow models across datasets. Bold entries mark the best algorithm and underline the second best for each dataset and metric. Note that this table's results uses Midpoint SDE integration with 100 steps.}
    \begin{tabular}{lcccccccc}
    \toprule
    Dataset & FM-OT & FM-VP & SM-OT & SM-VP & NM-OT & NM-VP & VFM-OT & VFM-VP \\
    \midrule
    Adult & 0.81±0.01 & 0.85±0.00 & 0.84±0.00 & 0.85±0.00 & 0.84±0.00 & 0.84±0.00 & \textbf{0.78±0.00}& \underline{0.79±0.01} \\
    Canada & 2.57±0.00 & 2.69±0.00 & 2.73±0.01 & 2.69±0.01 & 2.73±0.01 & 2.68±0.01 & \textbf{2.43±0.01} & \underline{2.44±0.00}\\
    Churn & \textbf{0.16±0.00}& 0.20±0.00 & 0.17±0.00 & 0.17±0.00 & 0.17±0.00 & 0.17±0.01 & 0.16±0.00 & \underline{0.16±0.00}\\
    Fiji & 2.58±0.00 & 2.67±0.00 & 2.65±0.00 & 2.67±0.02 & 2.65±0.00 & 2.67±0.00 & \textbf{2.50±0.00}& \underline{2.50±0.00}\\
    Indonesia & 0.18±0.00 & 0.18±0.00 & 0.18±0.00 & 0.18±0.00 & 0.18±0.00 & 0.18±0.00 & \textbf{0.17±0.00}& \underline{0.17±0.00}\\
    Rwanda & 0.73±0.00 & 0.77±0.00 & 0.81±0.00 & 0.79±0.00 & 0.81±0.01 & 0.79±0.00 & \textbf{0.68±0.00}& \underline{0.69±0.00}\\
    UK & 1.77±0.01 & 1.83±0.00 & 1.81±0.00 & 1.80±0.00 & 1.80±0.00 & 1.82±0.00 & \textbf{1.73±0.00}& \underline{1.74±0.00}\\
    \bottomrule
    \end{tabular}
    \label{tab:res-baseline-wd-sde-mp}
\end{table}

\end{document}